\documentclass{article} 
\usepackage{arxiv}


\usepackage{amsmath,amsfonts,bm}









\def\eqref#1{equation~\ref{#1}}









\def\1{\bm{1}}










\DeclareMathAlphabet{\mathsfit}{\encodingdefault}{\sfdefault}{m}{sl}
\SetMathAlphabet{\mathsfit}{bold}{\encodingdefault}{\sfdefault}{bx}{n}

\def\tL{{\tens{L}}}











\newcommand{\E}{\mathbb{E}}



\usepackage{enumitem}
\usepackage[utf8]{inputenc} 
\usepackage[T1]{fontenc}    
\usepackage{hyperref}       
\usepackage{url}            
\usepackage{booktabs}       
\usepackage{amsfonts,amsmath,amssymb,latexsym,amsthm,cleveref}       
\usepackage{nicefrac}       
\usepackage{microtype}      
\usepackage{xcolor}         
\usepackage{graphicx}
\usepackage{multirow}
\usepackage{array}
\usepackage{tabularx}
\usepackage{diagbox}
\usepackage{makecell}
\usepackage{svg}
\usepackage{subcaption}
\usepackage{caption}
\usepackage{natbib}

\title{A Versatile Influence Function for Data Attribution with Non-Decomposable Loss}


\author{
Junwei Deng \\
University of Illinois Urbana-Champaign \\
\texttt{junweid2@illinois.edu} \\
\And
Weijing Tang \\
Carnegie Mellon University \\
\texttt{weijingt@andrew.cum.edu} \\
\And
Jiaqi W. Ma \\
University of Illinois Urbana-Champaign \\
\texttt{jiaqima@illinois.edu} \\
}

%

\usepackage{amsfonts,amsmath,amsthm,amssymb,color,graphicx,bm,bbm}
\usepackage{cleveref}
\Crefname{equation}{Eq.}{Eqs.}

\newtheorem{theorem}{Theorem}[section]
\newtheorem*{theorem*}{Theorem}
\newtheorem{lemma}{Lemma}[section]
\newtheorem{proposition}{Proposition}[section]

\newtheorem{definition}{Definition}[section]

\newtheorem{remark}{Remark}
\newtheoremstyle{example}
  {3pt}   
  {3pt}   
  {\itshape}  
  {}      
  {\bfseries} 
  {.}     
  { }     
  {\thmname{#1}\thmnumber{ #2}\thmnote{ (#3)}} 

\theoremstyle{example}

\def\L{{\mathcal{L}}}

\def\tL{{\tilde{\mathcal{L}}}}
\def\LD{{\mathcal{L}_D}}

\def\htheta{{\hat{\theta}}}
\def\ttheta{{\tilde{\theta}}}
\def\hthetaD{{\hat{\theta}_D}}
\def\one{{\mathbf{1}}}
\def\zero{{\mathbf{0}}}
\newcommand{\om}[1]{\one_{-#1}}

\def\IF{{\operatorname{IF}}}
\def\VIF{{\operatorname{VIF}}}
\def\FDIF{{\widehat{\operatorname{IF}}}}

\makeatletter


\renewcommand{\paragraph}{\@startsection{paragraph}{4}{\z@}%
  {0.35ex plus 0.1ex minus .05ex}
  {-0.3em}
  {\normalsize\bf}}
\makeatother

\begin{document}

\maketitle

\begin{abstract}
    \emph{Influence function}, a technique rooted in robust statistics, has been adapted in modern machine learning for a novel application: data attribution---quantifying how individual training data points affect a model's predictions. However, the common derivation of influence functions in the data attribution literature is limited to loss functions that can be decomposed into a sum of individual data point losses, with the most prominent examples known as M-estimators. This restricts the application of influence functions to more complex learning objectives, which we refer to as \emph{non-decomposable losses}, such as contrastive or ranking losses, where a unit loss term depends on multiple data points and cannot be decomposed further. In this work, we bridge this gap by revisiting the general formulation of influence function from robust statistics, which extends beyond M-estimators. Based on this formulation, we propose a novel method, the \emph{Versatile Influence Function} (VIF), that can be straightforwardly applied to machine learning models trained with any non-decomposable loss. In comparison to the classical approach in statistics, the proposed VIF is designed to fully leverage the power of auto-differentiation, hereby eliminating the need for case-specific derivations of each loss function. We demonstrate the effectiveness of VIF across three examples: Cox regression for survival analysis, node embedding for network analysis, and listwise learning-to-rank for information retrieval. In all cases, the influence estimated by VIF closely resembles the results obtained by brute-force leave-one-out retraining, while being up to \(10^3\) times faster to compute. We believe VIF represents a significant advancement in data attribution, enabling efficient influence-function-based attribution across a wide range of machine learning paradigms, with broad potential for practical use cases.
\end{abstract}

\section{Introduction}\label{sec:intro}

\emph{Influence function} (IF) is a well-established technique originating from robust statistics and has been adapted to the novel application of \emph{data attribution} in modern machine learning~\citep{Koh2017-ov}. Data attribution aims to quantify the impact of individual training data points on model outputs, which enables a wide range of data-centric applications such as mislabeled data detection~\citep{Koh2017-ov}, data selection~\citep{Xia2008-nk}, and copyright compensation~\citep{Deng2023-at}.

Despite its broad potential, the application of IFs for data attribution has been largely limited to loss functions that can be decomposed into a sum of individual data point losses---such as those commonly used in supervised learning or maximum likelihood estimation, which are also known as M-estimators. This limitation arises from the specific way that IFs are typically derived in the data attribution literature~\citep{Koh2017-ov,Grosse2023-pu}, where the derivation involves perturbing the weights of individual data point losses. As a result, this restricts the application of IF-based data attribution methods to more complex machine learning objectives, such as contrastive or ranking losses, where a unit loss term depends on multiple data points and cannot be further decomposed into individual data point losses. We refer to such loss functions as \emph{non-decomposable losses}.

To address this limitation, we revisit the general formulation of IF in statistics literature~\citep{Huber2009-lf}, which can extend beyond M-estimators. Specifically, statistical estimators are viewed as functionals of probability measures, and the IF is derived as a functional derivative in a specific perturbation direction. In principle, this formulation applies to any estimator defined as the minimizer of a loss function that depends on an (empirical) probability measure, which corresponds to the learned parameters in the context of machine learning. However, directly applying this general formulation to modern machine learning models poses significant challenges. Firstly, deriving the precise IF for a particular loss function often requires complex, case-by-case mathetical derivations, which can be challenging for intricate loss functions and models. Secondly, for non-convex models, the (local) minimizer of the loss function is not unique; as a result, the mapping from the probability measure to the learned model parameters is not well-defined, making it unclear how the IF should be derived. 

To overcome these challenges, we propose the Versatile Influence Function (VIF), a novel method that extends IF-based data attribution to models trained with non-decomposable losses. The proposed VIF serves as an approximation of the general formulation of IF but can be efficiently computed using auto-differentiation tools available in modern machine learning libraries. This approach eliminates the need for case-specific derivations of each loss function. Furthermore, like existing IF-based data attribution methods, VIF does not require model retraining and can be generalized to non-convex models using similar heuristic tricks~\citep{Koh2017-ov,Grosse2023-pu}.

We validate the effectiveness of VIF through both theoretical analysis and empirical experiments. In special cases like M-estimation, VIF recovers the classical IF exactly. For Cox regression, we show that VIF closely approximates the classical IF. Empirically, we demonstrate the practicality of VIF across several tasks involving non-decomposable losses: Cox regression for survival analysis, node embedding for network analysis, and listwise learning-to-rank for information retrieval. In all cases, VIF closely approximates the influence obtained from the brute-force leave-one-out retraining while significantly reducing computational time---achieving speed-ups of up to \(10^3\) times. We also provide case studies demonstrating VIF can help interpret the behavior of the models.
By extending IF to non-decomposable losses, VIF opens new opportunities for data attribution in modern machine learning models, enabling data-centric applications across a wider range of domains.
\section{Related Work}\label{sec:related}

\paragraph{Data Attribution.} Data attribution methods can be roughly categorized into two groups: retraining-based and gradient-based methods~\citep{hammoudeh2024training}. Retraining-based methods~\citep{ghorbani2019data, jia2019towards, kwon2021beta, wang2023data, ilyas2022datamodels} typically estimate the influence of individual training data points by repeatedly retraining models on subsets of the training dataset. While these methods have been shown effective, they are not scalable for large-scale models and applications. In contrast, gradient-based methods~\citep{Koh2017-ov, guo2020fastif, barshan2020relatif, Schioppa2022-bi, kwon2023datainf, yeh2018representer, pruthi2020estimating, park2023trak} estimate the training data influence based on the gradient and higher-order gradient information of the original model, avoiding expensive model retraining. In particular, many gradient-based methods~\citep{Koh2017-ov,guo2020fastif,barshan2020relatif,Schioppa2022-bi,kwon2023datainf,pruthi2020estimating,park2023trak} can be viewed as variants of IF-based data attribution methods. Therefore, extending IF-based data attribution methods to a wider domains could lead to a significant impact on data attribution.

\paragraph{Influence Function in Statistics.} The IF is a well-established concept in statistics dating back at least to \citet{Hampel1974-mz}, though it is typically applied for purposes other than data attribution. Originally introduced in the context of robust statistics, it was used to assess the robustness of statistical estimators~\citep{Huber2009-lf} and later adapted as a tool for developing asymptotic theories~\citep{van-der-Vaart2012-iw}. Notably, IFs have been derived for a wide range of estimators beyond M-estimators, including L-estimators, R-estimators, and others~\citep{Huber2009-lf,van-der-Vaart2012-iw}. Closely related to an example of this study, \citet{Reid1985-ng} developed the IF for the Cox regression model. However, the literature in statistics often approaches the derivation of IFs through precise definitions specific to particular estimators, requiring case-specific derivations. In contrast, this work proposes an approximation for the general IF formulation in statistics, which can be straightforwardly applied to a broad family of modern machine learning loss functions for the purpose of data attribution. While this approach involves some degree of approximation, it benefits from being more versatile and computationally efficient, leveraging auto-differentiation capabilities provided by modern machine learning libraries.
\section{The Versatile Influence Function}\label{sec:method}

\subsection{Preliminaries: IF-Based Data Attribution for Decomposable Loss}

We begin by reviewing the formulation of IF-based data attribution in prior literature~\citep{Koh2017-ov,Schioppa2022-bi,Grosse2023-pu}. IF-based data attribution aims to approximate the effect of leave-one-out (LOO) retraining---the change of model parameters after removing one training data point and retraining the model---which could be used to quantify the influence of this training data point. 

Formally, suppose we have the following loss function\footnote{The subscript \(D\) in \(\LD\) refers to ``decomposable'', which is included to differentiate with the later notation.},
\begin{equation}
    \LD(\theta) = \sum_{i=1}^n \ell(\theta; z_i), \label{eq:decomposable-loss}
\end{equation}
where \(\theta\) is the model parameters, \(\{z_i\}_{i=1}^n\) is the training dataset, and each \(\ell(\cdot; z_i), i=1,\ldots, n,\) corresponds to the loss function of one training data point \(z_i\). The IF-based data attribution is derived by first inserting a binary weight \(w_i\) in front of each \(\ell(\cdot; z_i)\) to represent the inclusion or removal of the individual data points, transforming \(\LD(\theta)\) to a weighted loss
\begin{equation}
    \LD(\theta, w) = \sum_{i=1}^n w_i \ell(\theta; z_i). \label{eq:weighted-loss}
\end{equation}
Note that \(w=\one\) corresponds to the original loss in \Cref{eq:decomposable-loss}; while removing the \(i\)-th data point is to set \(w_i = 0\) or, equivalently, \(w=\om{i}\), where \(\om{i}\) is a vector of all one except for the \(i\)-th element being zero. 
Denote the learned parameters as \(\hthetaD(w) := \arg\min_{\theta} \LD(\theta, w)\)\footnote{While this definition is technically valid only under specific assumptions about the loss function (e.g., strict convexity), in practice, methods developed based on these assumptions (together with some heuristics tricks) are often applicable to more complicated models such as neural networks~\citep{Koh2017-ov}.}. The LOO effect for data point \(i\) is then characterized by \(\hthetaD(\om{i}) - \hthetaD(\one).\)

However, evaluating \(\hthetaD(\om{i})\) is computationally expensive as it requires model retraining. \citet{Koh2017-ov} proposed to approximate the LOO effect by relaxing the binary weights in \(w\) to the continuous interval \([0, 1]\) and measuring the influence of the training data point \(z_i\) on the learned parameters as
\begin{equation}
    \left. \frac{\partial \hthetaD(w)}{\partial w_i} \right\rvert_{w = \one} = - \left[\nabla_{\theta}^2 \LD(\hthetaD(\one), \one)\right]^{-1} \nabla_{\theta} \ell(\hthetaD(\one); z_i), \label{eq:IF-decomposable}
\end{equation}
which can be evaluated using only \(\hthetaD(\one)\), hence eliminating the need for expensive model retraining.

However, by construction, this approach critically relies on the introduction of the loss weights \(w_i\)'s, and is thus limited to loss functions that are \emph{decomposable} with respect to the individual training data points, taking the form of \Cref{eq:decomposable-loss}.

\subsection{Non-Decomposable Loss} \label{sec:non-decomposable-loss}
In practice, there are many common loss functions that are \emph{not} decomposable. Below we list a few examples.

\paragraph{Example 1: Cox's Partial Likelihood.}
The Cox regression model~\citep{Cox1972-uz} is one of the most widely used models in survival analysis, designed to analyze the time until specific events occur (e.g., patient death or customer churn). A unique challenge in survival analysis is handling \textit{censored} observations, where the exact event time is unknown because the event has either not occurred by the end of the study or the individual is lost to follow-up. These censored data points contain partial information about the event timing and should be properly modeled to improve estimation.  The Cox regression model is defined through specifying a hazard function over time \(t\) conditional on the individual feature \(x\):
\[
    h(t\mid x) = h_0(t) \exp(\theta^\top x),
\]
where \(h_0(t)\) is a baseline hazard function and \(\exp(\theta^\top x)\) is the relative risk with \(\theta\) as the model parameters to be estimated. Given \(n\) data points \(\{(X_i, Y_i, \Delta_i)\}_{i=1}^n,\) where \(X_i\) represents the features for the \(i\)-th data point, \(Y_i\) denotes the observed time (either the event time or the censoring time), and \(\Delta_i\) is the binary event indicator (\(\Delta_i = 1\) if the event has occurred and \(\Delta_i = 0\) if the observation is censored), the parameters \(\theta\) can be learned through minimizing the following \emph{negative log partial likelihood}
\begin{equation}
    \L_{\text{Cox}}(\theta) = -\sum_{i:\Delta_i = 1} \left(\theta^\top X_i - \log\sum_{j\in R_i} \exp(\theta^\top X_j)\right), \label{eq:partial-likelihood}
\end{equation}
where \(R_i := \{j: Y_j > Y_i\}\) is called the \emph{at-risk set} for the $i$-th data point.

In \Cref{eq:partial-likelihood}, each data point may appear in multiple loss terms if it belongs to the at-risk sets of other data points. Consequently, we can no longer characterize the effect of removing a training data point by simply introducing the loss weight.

\paragraph{Example 2: Contrastive Loss.}
Contrastive losses are commonly seen in unsupervised representation learning across various modalities, such as word embeddings~\citep{Mikolov2013-yz}, image representations~\citep{Chen2020-ks}, or node embeddings~\citep{Perozzi2014-ci}. Generally, contrastive losses rely on a set of triplets, \(D = \{(u_i, v_i, N_i)\}_{i=1}^m,\) where \(u_i\) is an anchor data point, \(v_i\) is a positive data point that is relevant to \(u_i,\) while \(N_i\) is a set of negative data points that are irrelevant to \(u_i.\) The contrastive loss is then the summation over such triplets:
\begin{equation}
    \L_{\text{Contrast}}(\theta) = \sum_{i=1}^m \ell(\theta; (u_i, v_i, N_i)),\label{eq:contrastive-loss}
\end{equation}
where the loss \(l(\cdot)\) could take many forms. In word2vec~\citep{Mikolov2013-yz} for word embeddings or DeepWalk~\citep{Perozzi2014-ci} for node embeddings, \(\theta\) corresponds to the embedding parameters for each word or node, while the loss \(l(\cdot)\) could be defined by heirarchical softmax or negative sampling (see \citet{Rong2014-ff} for more details).

Similar to \Cref{eq:partial-likelihood}, each single term of the contrastive loss in \Cref{eq:contrastive-loss} involves multiple data points. Moreover, taking node embeddings as an example, the set of triplets \(D\) is constructed by running random walks on the network. Removing one data point, which is a node in this context, could also affect the proximity of other pairs of nodes and hence the construction of \(D\).

\paragraph{Example 3: Listwise Learning-to-Rank.}
Learning-to-rank is a core technology underlying information retrieval applications such as search and recommendation. In this context, listwise learning-to-rank methods aim to optimize the ordering of a set of documents or items based on their relevance to a given query. One prominent example of such methods is ListMLE~\citep{Xia2008-nk}. Suppose we have annotated results for \(m\) queries over \(n\) items as a dataset \(\{(x_i, (y_i^{(1)}, y_i^{(2)}, \ldots, y_i^{(k)})\}_{i=1}^m,\) where \(x_i\) is the query feature, \(y_i^{(1)}, y_i^{(2)}, \ldots, y_i^{(k)}\in [n] := \{1,\ldots,n\}\) indicate the top \(k\) items for query \(i.\) Then the ListMLE loss function is defined as following
\begin{equation}
    \L_{\text{LTR}}(\theta) = - \sum_{i=1}^m \sum_{j=1}^k \left( f(x_i; \theta)_j - \log \sum_{l\in [n]\setminus \{y_i^{(1)}, \ldots, y_i^{(j-1)}\}} \exp(f(x_i; \theta)_l)\right),\label{eq:listmle}
\end{equation}
where \(f(\cdot; \theta)\) is a model parameterized by \(\theta\) that takes the query feature as input and outputs \(n\) logits for predicting the relevance of the \(n\) items.

In this example, \Cref{eq:listmle} is decomposable with respect to the queries while not decomposable with respect to the items. The influence of items could also be of interest in information retrieval applications. For example, in a search engine, we may want to detect webpages with malicious search engine optimization~\citep{invernizzi2012evilseed}; in product co-purchasing recommendation~\citep{zhao2017improving}, both the queries and items are products. 

\paragraph{A General Loss Formulation.} The examples above can be viewed as special cases of the following formal definition of \emph{non-decomposable loss}.

\begin{definition}[Non-Decomposable Loss]
\label{def:non-decomposable-loss}
Given \(n\) objects of interest within the training data, let a binary vector \(b\in \{0, 1\}^n\) indicate the presence of the individual objects in training, i.e., for \(i=1, \ldots, n,\)
\[
    b_i = \left\{\begin{array}{ll}
        1 & \text{if the \(i\)-th object presents,} \\
        0 & \text{otherwise.}
    \end{array} \right.
\]
Suppose the machine learning model parameters are denoted as \(\theta\in \mathbb{R}^d\), a \emph{non-decomposable loss} is any function
\[
    \L: \mathbb{R}^d \times \{0, 1\}^n \rightarrow \mathbb{R},
\]
that maps given model parameters \(\theta\) and the object presence vector \(b\) to a loss value \(\L(\theta, b).\)
\end{definition}

Denoting \(\htheta(b) = \arg\min_{\theta} \L(\theta, b)\) on any non-decomposable loss \(\L(\theta, b)\), the LOO effect of data point \(i\) on the learned parameters can still be properly defined by
\[
    \htheta(\om{i}) - \htheta(\one).
\]
However, in this case, we can no longer use the partial derivative with respect to \(b_i\) to approximate the LOO effect, as \(\htheta(b)\) is only well-defined for binary vectors \(b\).

\begin{remark}[``Non-Decomposable'' v.s. ``Not Decomposable'']
The class of non-decomposable loss in \Cref{def:non-decomposable-loss} includes the decomposable loss in \Cref{eq:decomposable-loss} as a special case when \(\L(\theta, b) := \sum_{i: b_i = 1} l_i(\theta)\). Throughout this paper, we will call loss functions that cannot be written in the form of \Cref{eq:decomposable-loss} as ``not decomposable''. We name the general class of loss functions in \Cref{def:non-decomposable-loss} as \emph{non-decomposable loss} to highlight that they are generally not decomposable.
\end{remark}

\begin{remark}[Randomness in Losses]\label{remark:randomness}
Strictly speaking, many contrastive losses are not deterministic functions of training data points as there is randomness in the construction of the triplet set \(D,\) due to procedures such as negative sampling or random walk. However, our method derived for the deterministic non-decomposable loss still gives meaningful results in practice for losses with randomness.
\end{remark}

\subsection{The Statistical Perspective of Influence Function}\label{sec:stat-IF}

\paragraph{The Statistical Formulation of IF.} To derive IF-based data attribution for non-decomposable losses, we revisit a general formulation of IF in robust statistics~\citep{Huber2009-lf}. Let \(\Omega\) be a sample space, and \(T(\cdot)\) is a function that maps from a probability measure on \(\Omega\) to a vector in \(\mathbb{R}^d.\) Let \(P\) and \(Q\) be two probability measures on \(\Omega.\) The IF of \(T(\cdot)\) at $P$ in the direction \(Q\) measures the infinitesimal change towards a specific perturbation direction \(Q,\) which is defined as
\[
\IF(T(P);Q) := \lim_{\varepsilon \rightarrow 0} \frac{T((1-\varepsilon) P + \varepsilon Q) - T(P)}{\varepsilon}.
\]
In the context of machine learning, the learned model parameters, denoted as \(\ttheta(P)\), can be viewed as a function of the data distribution \(P\). 
Specifically, the parameters of the learned model are typically obtained by minimizing a loss function, i.e., \(\ttheta(P) =\arg\min_{\theta} \tL(\theta, P)\). Here, $\tL(\theta, P)$ is a loss function that depends on a probability measure \(P\), distinguishing it from the non-decomposable loss $\L(\theta, b)$ that depends on the object presence vector $b$.

Assuming the loss is strictly convex and twice-differentiable with respect to the parameters, the learned parameters \(\ttheta(P)\) are then implicitly determined by the following equation
\[
\nabla_{\theta}\tL(\ttheta(P), P) = \zero.
\]
Moreover, the IF of \(\ttheta(P)\) with a perturbation towards \(Q\) is given by\footnote{See \Cref{appendix:IF-general-derivation} for the derivation.}
\begin{equation}
    \IF(\ttheta(P); Q) = -\left[\nabla_{\theta}^2 \tL(\ttheta(P), P) \right]^{-1} \lim_{\varepsilon \rightarrow 0} \frac{\nabla_{\theta}\tL(\ttheta(P), (1-\varepsilon) P + \varepsilon Q) - \nabla_{\theta}\tL(\ttheta(P), P)}{\varepsilon}. \label{eq:IF-general}
\end{equation}
The advantage of the IF formulation in \Cref{eq:IF-general} is that it can be applied to more general loss functions by properly specifying \(P, Q,\) and \(\tL\).

\paragraph{Example: Application of \Cref{eq:IF-general} to M-Estimators.} As an example, the following \Cref{lemma:IF-M} states that the IF in \Cref{eq:IF-decomposable} for decomposable loss can be viewed as a special case of the formulation in \Cref{eq:IF-general}. This is a well-known result for M-estimators in robust statistics~\citep{Huber2009-lf}, and the proof of which can be found in \Cref{appendix:IF-M-proof}. Intuitively, with the choice of \(P, Q,\) and \(\tL\) in \Cref{lemma:IF-M}, \((1-\varepsilon) P + \varepsilon Q = (1-\varepsilon) \mathbb{P}_n + \varepsilon \delta_{z_i}\) leads to the effect of upweighting the loss weight of \(z_i\) with a small perturbation, which is essentially how the IF in \Cref{eq:IF-decomposable} is derived.

\begin{lemma}[IF for M-Estimators]\label{lemma:IF-M}
\Cref{eq:IF-general} reduces to \Cref{eq:IF-decomposable} up to a constant when we specify that 1) \(P\) is the empirical distribution \(\mathbb{P}_n= \sum_{i=1}^n \delta_{z_i}/n\), where \(\delta_{z_i}\) is the Dirac measure, i.e.,  \(\Pr(z_i) = 1\) and \(\Pr(z_j)=0, j\neq i\); 2) \(Q=\delta_{z_i}\); and 3) \(\tL(\theta, P) := \E_{z\sim P} \left[\ell(\theta; z)\right]\). Specifically,
\[\IF(\ttheta(\mathbb{P}_n); \delta_{z_i}) = - n \left[\nabla_{\theta}^2 \LD(\hthetaD(\one), \one)\right]^{-1} \nabla_{\theta} \ell(\hthetaD(\one); z_i).\]
\end{lemma}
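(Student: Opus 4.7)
The plan is to substitute the three specifications into Equation~\ref{eq:IF-general} and simplify, using the linearity of $\tL(\theta,\cdot)$ in its probability-measure argument together with the first-order optimality condition.

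First I would record the consequences of the three specifications. Because $\tL(\theta,P)=\E_{z\sim P}[\ell(\theta;z)]$ is linear in $P$, evaluating at the mixture gives
\[
\tL\bigl(\theta,(1-\varepsilon)\mathbb{P}_n+\varepsilon\delta_{z_i}\bigr)
=(1-\varepsilon)\tL(\theta,\mathbb{P}_n)+\varepsilon\,\ell(\theta;z_i),
\]
and in particular $\tL(\theta,\mathbb{P}_n)=\frac{1}{n}\sum_{j=1}^n \ell(\theta;z_j)=\frac{1}{n}\LD(\theta,\one)$. Since a positive scalar does not change the minimizer, $\ttheta(\mathbb{P}_n)=\hthetaD(\one)$, and both the gradient and Hessian of $\tL(\cdot,\mathbb{P}_n)$ are $1/n$ times the corresponding quantities of $\LD(\cdot,\one)$.

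Next I would handle the limit in the numerator of Equation~\ref{eq:IF-general}. Taking $\nabla_\theta$ of the linearity identity above, evaluating at $\theta=\ttheta(\mathbb{P}_n)$, and subtracting $\nabla_\theta\tL(\ttheta(\mathbb{P}_n),\mathbb{P}_n)$ yields
\[
\nabla_\theta\tL\bigl(\ttheta(\mathbb{P}_n),(1-\varepsilon)\mathbb{P}_n+\varepsilon\delta_{z_i}\bigr)
-\nabla_\theta\tL(\ttheta(\mathbb{P}_n),\mathbb{P}_n)
=\varepsilon\bigl[\nabla_\theta\ell(\ttheta(\mathbb{P}_n);z_i)-\nabla_\theta\tL(\ttheta(\mathbb{P}_n),\mathbb{P}_n)\bigr].
\]
By the first-order optimality condition, $\nabla_\theta\tL(\ttheta(\mathbb{P}_n),\mathbb{P}_n)=\zero$, so after dividing by $\varepsilon$ and passing to the limit one simply obtains $\nabla_\theta\ell(\hthetaD(\one);z_i)$.

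Finally I would combine the two pieces: the Hessian prefactor in Equation~\ref{eq:IF-general} is $[\tfrac{1}{n}\nabla_\theta^2\LD(\hthetaD(\one),\one)]^{-1}=n\,[\nabla_\theta^2\LD(\hthetaD(\one),\one)]^{-1}$, which multiplied by $-\nabla_\theta\ell(\hthetaD(\one);z_i)$ gives exactly the stated expression, matching Equation~\ref{eq:IF-decomposable} up to the scalar $-n$. There is no real obstacle here; the only subtle point worth flagging is the cancellation that requires the optimality condition $\nabla_\theta\tL(\ttheta(\mathbb{P}_n),\mathbb{P}_n)=\zero$, which is implicit in the general derivation of Equation~\ref{eq:IF-general} and is what turns the directional derivative into the pure gradient $\nabla_\theta\ell(\hthetaD(\one);z_i)$.
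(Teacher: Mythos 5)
Your proposal is correct and follows essentially the same route as the paper's proof: both exploit the linearity of $\tL(\theta,\cdot)$ in the probability measure, the identification $\tL(\theta,\mathbb{P}_n)=\frac{1}{n}\LD(\theta,\one)$ (hence the shared minimizer and the $\frac{1}{n}$-scaled Hessian), and the first-order optimality condition to reduce the limit to $\nabla_\theta\ell(\hthetaD(\one);z_i)$. No gaps.
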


\paragraph{Challenges of Applying \Cref{eq:IF-general} in Modern Machine Learning.} While the IF in \Cref{eq:IF-general} is a principled and well-established notion in statistics, there are two unique challenges when applying it to modern machine learning models for general non-decomposable losses. Firstly, solving the limit in the right hand side of \Cref{eq:IF-general} requires case-by-case derivation for different loss functions and models, which can be complicated (see an example of IF for the Cox regression~\citep{Reid1985-ng} in \Cref{appendix:Cox-formula}). Secondly, the mapping \(\ttheta(P),\) hence the limit, are not well-defined for non-convex loss functions as the (local) minimizer is not unique. A similar problem exists in the IF for decomposable loss in \Cref{eq:IF-decomposable} and \citet{Koh2017-ov} mitigate this problem through heuristic tricks specifically designed for \Cref{eq:IF-decomposable}. However, the IF in \Cref{eq:IF-general} is in general more complicated for non-decomposable losses and generalizing it to modern setups like neural networks remains unclear. 

\subsection{VIF as A Finite Difference Approximation}

We now derive the proposed VIF method by applying \Cref{eq:IF-general} to the non-decomposable loss while addressing the aforementioned challenges through a \emph{finite-difference approximation}.
\begin{definition}[Finite-Difference IF]\label{def:finite-difference-IF}
    Define the \emph{finite-difference IF} as following 
    \begin{equation}
        \FDIF_{\varepsilon}(\ttheta(P); Q) := -\left[\nabla_{\theta}^2 \tL(\ttheta(P), P) \right]^{-1} \frac{\nabla_{\theta}\tL(\ttheta(P), (1-\varepsilon) P + \varepsilon Q) - \nabla_{\theta}\tL(\ttheta(P), P)}{\varepsilon}, \label{eq:IF-finite-difference}
    \end{equation}
    which approximates the IF in \Cref{eq:IF-general}, \(\IF(\ttheta(P); Q)\), by replacing the limit with a finite difference.
\end{definition}

\paragraph{Observation on M-Estimators.} The proposed VIF method for general non-decomposable losses is motivated by the following observation in the special case for M-estimators.
\begin{theorem}[Finite-Difference IF for M-Estimators]\label{thm:finite-difference-IF-M}
Under the specification of \(P=\mathbb{P}_n, Q=\delta_{z_i},\) and \(\tL=\E_{z\sim P} \left[\ell(\theta; z)\right]\) in \Cref{lemma:IF-M}, the IF is identical to the finite-difference IF with \(\varepsilon=-\frac{1}{n-1},\) i.e., 
\[\IF(\ttheta(\mathbb{P}_n); \delta_{z_i}) = \FDIF_{-\frac{1}{n-1}}(\ttheta(\mathbb{P}_n); \delta_{z_i}).\]
Furthermore, denote \(\mathbb{Q}^{(-i)}_{n-1}\) as the empirical distribution where \(\Pr(z_i) = 0\) and \(\Pr(z_j)=\frac{1}{n-1}, j\neq i\). Then we have
\[(1 + \frac{1}{n-1})\mathbb{P}_n - \frac{1}{n-1}\delta_{z_i} = \mathbb{Q}^{(-i)}_{n-1}, \quad \FDIF_{-\frac{1}{n - 1}}(\ttheta(\mathbb{P}_n); \delta_{z_i}) = -(n-1)\FDIF_{1}(\ttheta(\mathbb{P}_n); \mathbb{Q}^{(-i)}_{n-1}).\]
\end{theorem}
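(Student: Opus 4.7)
The plan is to exploit a single structural fact that makes M-estimators special: the loss functional $\tL(\theta, P) = \E_{z\sim P}[\ell(\theta; z)]$ is linear in its second argument. This linearity passes through the gradient in $\theta$, so for any mixture one has $\nabla_\theta \tL(\theta, (1-\varepsilon)P + \varepsilon Q) = (1-\varepsilon)\nabla_\theta \tL(\theta, P) + \varepsilon \nabla_\theta \tL(\theta, Q)$. Substituting into the finite-difference quotient in \Cref{def:finite-difference-IF}, the $\nabla_\theta \tL(\theta, P)$ terms cancel and the $\varepsilon$ in the denominator cancels the one in the numerator, yielding the closed form
\begin{equation*}
\FDIF_{\varepsilon}(\ttheta(P); Q) = -\left[\nabla_{\theta}^2 \tL(\ttheta(P), P)\right]^{-1} \left[\nabla_{\theta}\tL(\ttheta(P), Q) - \nabla_{\theta}\tL(\ttheta(P), P)\right]
\end{equation*}
for every $\varepsilon \neq 0$. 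In other words, in the M-estimator case $\FDIF_{\varepsilon}$ is constant in $\varepsilon$, so its limit as $\varepsilon \to 0$, which by \Cref{eq:IF-general} equals $\IF(\ttheta(P); Q)$, agrees with the value at every admissible $\varepsilon$, in particular $\varepsilon = -\tfrac{1}{n-1}$. This yields the first claimed equality.

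For the algebraic identity on measures, I would simply expand $\mathbb{P}_n = \tfrac{1}{n}\sum_{j=1}^n \delta_{z_j}$ and compute
\begin{equation*}
\Bigl(1 + \tfrac{1}{n-1}\Bigr)\mathbb{P}_n - \tfrac{1}{n-1}\delta_{z_i} = \tfrac{1}{n-1}\sum_{j=1}^{n} \delta_{z_j} - \tfrac{1}{n-1}\delta_{z_i} = \tfrac{1}{n-1}\sum_{j\neq i} \delta_{z_j} = \mathbb{Q}^{(-i)}_{n-1}.
\end{equation*}

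For the final identity, I apply the $\varepsilon$-independent formula above to both sides. The key observation is that with $\varepsilon = -\tfrac{1}{n-1}$ and $Q = \delta_{z_i}$, the mixture $(1-\varepsilon)\mathbb{P}_n + \varepsilon \delta_{z_i}$ is exactly $\mathbb{Q}^{(-i)}_{n-1}$ by the identity just verified. Hence $\FDIF_{-1/(n-1)}(\ttheta(\mathbb{P}_n); \delta_{z_i})$ and $\FDIF_{1}(\ttheta(\mathbb{P}_n); \mathbb{Q}^{(-i)}_{n-1})$ share the same Hessian inverse and the same gradient difference $\nabla_{\theta}\tL(\ttheta(\mathbb{P}_n), \mathbb{Q}^{(-i)}_{n-1}) - \nabla_{\theta}\tL(\ttheta(\mathbb{P}_n), \mathbb{P}_n)$; only the scalar denominator differs, contributing the overall factor $-(n-1)$ when I match the two expressions.

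Since every step is algebraic and relies only on linearity of expectation in the measure, there is no substantive obstacle. The only bookkeeping item to track is the sign coming from division by the negative perturbation size $\varepsilon = -\tfrac{1}{n-1}$, which is precisely what produces $-(n-1)$ rather than $+(n-1)$ in the final comparison.
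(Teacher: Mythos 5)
Your proof is correct, and for the first identity it takes a genuinely cleaner route than the paper. The paper's proof expands \(\FDIF_{-\frac{1}{n-1}}(\ttheta(\mathbb{P}_n);\delta_{z_i})\) into explicit sums over the data, invokes the first-order condition \(\sum_{j}\nabla_\theta\ell(\ttheta(\mathbb{P}_n);z_j)=0\), and checks that the result coincides with the closed form of \(\IF(\ttheta(\mathbb{P}_n);\delta_{z_i})\) from \Cref{lemma:IF-M}. You instead isolate the structural reason this works: \(\tL(\theta,P)=\E_{z\sim P}[\ell(\theta;z)]\) is affine in the measure, so the difference quotient in \Cref{def:finite-difference-IF} is independent of \(\varepsilon\) and therefore equals its \(\varepsilon\to 0\) limit, i.e.\ the IF of \Cref{eq:IF-general}, at every admissible \(\varepsilon\). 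This buys a slightly stronger and more transparent statement (\(\IF=\FDIF_{\varepsilon}\) for all \(\varepsilon\neq 0\), not only \(\varepsilon=-\tfrac{1}{n-1}\)) and avoids the sum manipulations; the paper's computation, in exchange, directly exhibits the common value \(-n\left[\nabla_\theta^2\LD(\hthetaD(\one),\one)\right]^{-1}\nabla_\theta\ell(\hthetaD(\one);z_i)\), which is the form reused elsewhere. Your handling of the measure identity and of the factor \(-(n-1)\) is essentially the paper's (\Cref{lemma:P-Q} plus the observation that the mixture at \(\varepsilon=-\tfrac{1}{n-1}\) equals \(\mathbb{Q}^{(-i)}_{n-1}\), so the two numerators agree while the denominators are \(-\tfrac{1}{n-1}\) versus \(1\)). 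One small wording slip: in that last step you say you ``apply the \(\varepsilon\)-independent formula to both sides,'' but the argument you actually run (same numerator, different denominator) uses the raw definition of \(\FDIF_{\varepsilon}\); had you literally substituted the \(\varepsilon\)-independent closed form on both sides you would have obtained gradient differences against \(\delta_{z_i}\) and \(\mathbb{Q}^{(-i)}_{n-1}\) respectively, and would then need the first-order condition to relate them. Either path is valid; just be consistent about which one you are on.
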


The first part of \Cref{thm:finite-difference-IF-M} suggests that, for M-estimators, the limit in \(\IF(\ttheta(\mathbb{P}_n); \delta_{z_i})\) can be exactly replaced by a finite difference with a proper choice of \(\varepsilon\). The second part of \Cref{thm:finite-difference-IF-M} further shows that we can construct another finite-difference IF, \(\FDIF_{1}(\ttheta(\mathbb{P}_n); \mathbb{Q}^{(-i)}_{n-1})\), with a different choice of \(Q=\mathbb{Q}^{(-i)}_{n-1}\) and \(\varepsilon=1\), that differs from \(\IF(\ttheta(\mathbb{P}_n); \delta_{z_i})\) only by a constant factor. For the purpose of data attribution, we typically only care about the relative influence among the training data points, so the constant factor does not matter.

\paragraph{Generalization to General Non-Decomposable Losses.} The benefit of having the form \(\FDIF_{1}(\ttheta(\mathbb{P}_n); \mathbb{Q}^{(-i)}_{n-1})\) is that it is straightforward to generalize this formula from M-estimators to general non-decomposable losses. Specifically, noticing that \(\mathbb{P}_n\) and \(\mathbb{Q}^{(-i)}_{n-1}\) are respectively empirical distribution on the full dataset and the dataset without \(z_i\), we can apply this finite-difference IF to any non-decomposable loss through an appropriate definition of \(\tL\).
\begin{definition}[\(\tL(\theta, P)\) for Non-Decomposable Loss]\label{def:L-P-non-decomposable}
Let \(\mathcal{P}(n)\) be the set of uniform distributions supported on subsets of \(n\) fixed points \(\{z_i\}_{i=1}^n\). Note that both of the empirical distributions \(\mathbb{P}_n\) and \(\mathbb{Q}^{(-i)}_{n-1}\) belong to the set \(\mathcal{P}(n)\). For any \(P \in \mathcal{P}(n)\), denote \(b^P\in \{0, 1\}^n\) as a binary vector such that \(b^P_i = \mathbbm{1}[P(z_i)>0], i=1,\ldots, n\). The \(\tL(\theta, P)\) for a non-decomposable loss \(\L\) can be defined as follows:
\[
    \tL(\theta, P):= \L(\theta, b^P).
\]
\end{definition}

\begin{proposition}[Finite-Difference IF on Non-Decomposable Loss] \label{thm:finite-difference-IF-non-decomposable}
Under~\Cref{def:L-P-non-decomposable}, we have
\begin{equation}
    \FDIF_{1}(\ttheta(\mathbb{P}_n); \mathbb{Q}^{(-i)}_{n-1}) = \left[\nabla_{\theta}^2 \L(\htheta(\one), \one) \right]^{-1}  \nabla_{\theta}\left(\L(\htheta(\one), \one) - \L(\htheta(\one), \om{i}) \right). \label{eq:IF-approximation}
\end{equation}
\end{proposition}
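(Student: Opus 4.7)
The plan is essentially a direct substitution, since the statement just specializes Definition 3.3 to the choices $\varepsilon = 1$, $P = \mathbb{P}_n$, $Q = \mathbb{Q}^{(-i)}_{n-1}$ and then translates from the measure-theoretic notation to the binary-vector notation via Definition 3.4.

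First I would unpack the two measures through Definition 3.4. Since $\mathbb{P}_n$ is the uniform distribution on all $n$ points, we have $b^{\mathbb{P}_n} = \one$, so $\tL(\theta, \mathbb{P}_n) = \L(\theta, \one)$ and consequently $\ttheta(\mathbb{P}_n) = \arg\min_\theta \L(\theta, \one) = \htheta(\one)$. Similarly, since $\mathbb{Q}^{(-i)}_{n-1}$ is uniform on the $n-1$ points obtained by dropping $z_i$, its indicator vector is $b^{\mathbb{Q}^{(-i)}_{n-1}} = \om{i}$, giving $\tL(\theta, \mathbb{Q}^{(-i)}_{n-1}) = \L(\theta, \om{i})$.

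Next I would observe that plugging $\varepsilon = 1$ into the convex combination collapses the perturbation: $(1 - 1)\mathbb{P}_n + 1 \cdot \mathbb{Q}^{(-i)}_{n-1} = \mathbb{Q}^{(-i)}_{n-1}$. Substituting this and the identifications from the previous step into the right-hand side of Definition 3.3 yields
\[
\FDIF_{1}(\ttheta(\mathbb{P}_n); \mathbb{Q}^{(-i)}_{n-1}) = -\left[\nabla_{\theta}^2 \L(\htheta(\one), \one)\right]^{-1} \left( \nabla_{\theta}\L(\htheta(\one), \om{i}) - \nabla_{\theta}\L(\htheta(\one), \one) \right),
\]
and distributing the minus sign produces exactly \Cref{eq:IF-approximation}.

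There is no real obstacle here; the content of the proposition is that the finite-difference form with $\varepsilon = 1$ is telescopic and reduces to a clean expression that involves only quantities evaluated at the binary weight vectors $\one$ and $\om{i}$, which is exactly what Definition 3.4 is set up to deliver. The only thing worth stating carefully is that $\mathbb{Q}^{(-i)}_{n-1} \in \mathcal{P}(n)$ (so that $\tL$ is well-defined on it via Definition 3.4), which follows immediately because $\mathbb{Q}^{(-i)}_{n-1}$ is the uniform distribution on the subset $\{z_j\}_{j\neq i}$ of the fixed $n$ points.
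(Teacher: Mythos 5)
Your proposal is correct and follows essentially the same route as the paper's proof: identify $b^{\mathbb{P}_n}=\one$ and $b^{\mathbb{Q}^{(-i)}_{n-1}}=\om{i}$ so that $\tL(\theta,\mathbb{P}_n)=\L(\theta,\one)$, $\tL(\theta,\mathbb{Q}^{(-i)}_{n-1})=\L(\theta,\om{i})$, and $\ttheta(\mathbb{P}_n)=\htheta(\one)$, then substitute into Definition~\ref{def:finite-difference-IF} with $\varepsilon=1$ and absorb the sign. Your additional remark that $\mathbb{Q}^{(-i)}_{n-1}\in\mathcal{P}(n)$ (so $\tL$ is well-defined there) is a small point of care the paper leaves implicit.
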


\paragraph{The Proposed VIF.}
We propose the following method to approximate the LOO effect for any non-decomposable loss.
\begin{definition}[Versatile Influence Function]\label{def:VIF}
    The \emph{Versatile Influence Function} (VIF) that measures the influence of a data object \(i\) on the parameters \(\htheta(\one)\) learned from a non-decomposable loss \(\L\) is defined as following
    \begin{equation}
        \VIF(\htheta(\one); i) := - \left[\frac{1}{n}\nabla_{\theta}^2 \L(\htheta(\one), \one) \right]^{-1}  \nabla_{\theta}\left(\L(\htheta(\one), \one) - \L(\htheta(\one), \om{i}) \right). \label{eq:VIF}
    \end{equation}
\end{definition}

The proposed VIF is a variant of \Cref{eq:IF-approximation}, as it can be easily shown that 
\[\VIF(\htheta(\one); i) = -n \FDIF_{1}(\ttheta(\mathbb{P}_n); \mathbb{Q}^{(-i)}_{n-1}).\]
The inclusion of the additional constant factor is motivated by \Cref{thm:finite-difference-IF-M} to make it better align with the original IF in \Cref{eq:IF-general}. In practice, this definition is also typically more numerically stable as the Hessian is normalized by \(\frac{1}{n}\).

\paragraph{Computational Advantages.} The VIF defined in \Cref{eq:VIF} enjoys a few computational advantages. Firstly, VIF depends on the parameters only at \(\htheta(\one)\) and does not require \(\htheta(\om{i})\). Therefore, it does not require model retraining. Secondly, compared to \Cref{eq:IF-general}, VIF only involves gradients and the Hessian of the loss, which can be easily obtained through auto-differentiation provided in modern machine learning libraries. Thirdly, VIF can be applied to more complicated models and accelerated with similar heuristic tricks employed by existing IF-based data attribution methods for decomposable losses~\citep{Koh2017-ov,Grosse2023-pu}. We have included the results of efficient approximate implementations of VIF based on Conjugate Gradient (CG) and LiSSA~\citep{agarwal2017second,Koh2017-ov} in \Cref{appendix:efficient-approximation}. Finally, note that VIF calculates the difference \(\L(\htheta(\one), \one) - \L(\htheta(\one), \om{i})\) before taking the gradient with respect to the parameters. In some special cases (see, e.g., the decomposable loss case in \Cref{sec:special-cases}), taking the difference before the gradient significantly simplifies the computation as the loss terms not involving the \(i\)-th data object will cancel out.

\paragraph{Attributing a Target Function.} In practice, we are often interested in attributing certain model outputs or performance. Similar to \citet{Koh2017-ov}, given a target function of interest, \(f(z, \theta),\) that depends on both some data \(z\) and the model parameter \(\theta\), then the influence of a training data point \(i\) on this target function can be obtained through the chain rule:
\begin{equation}
    \nabla_{\theta} f(z, \htheta(\one))^\top \VIF(\htheta(\one); i). \label{eq:VIF-target}
\end{equation}

\subsection{Approximation Quality in Special Cases}\label{sec:special-cases}

To provide insights into how accurately the proposed VIF approximates \Cref{eq:IF-general}, we examine the following special cases. Although there is no universal guarantee of the approximation quality for all non-decomposable losses, our analysis in these cases suggests that VIF may perform well in many practical applications.

\paragraph{M-Estimation (Decomposable Loss).} For a decomposable loss, we have $\nabla_\theta \LD(\hthetaD(\one), \one) = \sum_{i=1}^n \nabla_{\theta}\ell(\hthetaD(\one); z_i) $ and $\nabla_\theta \LD(\hthetaD (\one), \one_{-i}) = \sum_{j=1, j \neq i}^n \nabla_{\theta}\ell(\hthetaD(\one); z_j) $. In this case, it is straightforward to see that
\[\VIF(\htheta(\mathbf{1}) ; i) = - n \left[\nabla_\theta^2 \LD(\hthetaD(\mathbf{1}), \mathbf{1})\right]^{-1} \nabla_\theta \ell(\hthetaD(\one); z_i),\]
which indicates that the VIF here is identical to the IF in \Cref{lemma:IF-M} without approximation error.

\paragraph{Cox Regression.} The close-form of the IF for the Cox regression model, obtained by directly solving the limit in \Cref{eq:IF-general} under the Cox regression model, exists in the statistics literature~\citep{Reid1985-ng}, which allows us to characterize the approximation error of the VIF in comparison to the exact solution.
\begin{theorem}[Approximation Error under Cox Regression; Informal] \label{thm:informal-cox}
Denote the exact solution by \citet{Reid1985-ng} as \(\IF_{\text{Cox}}(\htheta(\one); i)\) while the application of VIF on Cox regression as \(\VIF_{\text{Cox}}(\htheta(\one); i)\). Their difference is bounded as following:
\[\VIF_{\text{Cox}}(\htheta(\one); i) - \IF_{\text{Cox}}(\htheta(\one); i) = {O}_p(\frac{1}{n}).\]
\end{theorem}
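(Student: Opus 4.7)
The plan is to write out explicit expressions for $\VIF_{\text{Cox}}(\htheta;i)$ (by applying \Cref{def:VIF} to the partial likelihood in \Cref{eq:partial-likelihood}) and for $\IF_{\text{Cox}}(\htheta;i)$ (Reid's classical formula, recalled in \Cref{appendix:Cox-formula}), subtract them, and bound the residual under standard Cox regularity conditions. Since both formulas share the same observed-information factor $\big[\tfrac{1}{n}\nabla_\theta^2 \L_{\text{Cox}}(\htheta)\big]^{-1}$---Reid's implicit-differentiation derivation at $\mathbb{P}_n$ produces exactly the matrix that VIF uses---the full approximation error is controlled by the difference between the VIF's finite-difference gradient $\nabla_\theta[\L(\htheta,\one) - \L(\htheta,\om{i})]$ and Reid's limit-based score derivative.

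For the VIF side, writing $w_j = e^{\htheta^\top X_j}$, $S_k = \sum_{j\in R_k}w_j$, $\rho_{k,i} = w_i/S_k$, $\bar X_{R_k} = \sum_{j\in R_k}X_j w_j/S_k$, and using $S_k^{-i} = S_k - w_i$ whenever $i \in R_k$, a direct computation gives
\[
\L(\htheta,\one) - \L(\htheta,\om{i}) = -\Delta_i(\htheta^\top X_i - \log S_i) - \sum_{k\in\mathcal{K}_i}\log(1-\rho_{k,i}),
\]
with $\mathcal{K}_i := \{k : \Delta_k = 1,\, k\neq i,\, i\in R_k\}$. Differentiating in $\theta$, using $\nabla\rho_{k,i} = \rho_{k,i}(X_i - \bar X_{R_k})$, yields
\[
\nabla_\theta\big[\L(\htheta,\one) - \L(\htheta,\om{i})\big] = -\Delta_i(X_i - \bar X_{R_i}) + \sum_{k\in\mathcal{K}_i}\frac{\rho_{k,i}}{1-\rho_{k,i}}(X_i - \bar X_{R_k}).
\]
Reid's exact score derivative at $\htheta$ along $\delta_{z_i}$ has the identical first term but weights each $(X_i - \bar X_{R_k})$ by $\rho_{k,i}$ rather than $\rho_{k,i}/(1-\rho_{k,i})$; the per-summand discrepancy is exactly $\rho_{k,i}/(1-\rho_{k,i}) - \rho_{k,i} = \rho_{k,i}^2/(1-\rho_{k,i})$.

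Collecting factors produces
\[
\VIF_{\text{Cox}}(\htheta;i) - \IF_{\text{Cox}}(\htheta;i) = -n\big[\nabla_\theta^2 \L_{\text{Cox}}(\htheta)\big]^{-1}\sum_{k\in\mathcal{K}_i}\frac{\rho_{k,i}^2}{1-\rho_{k,i}}(X_i - \bar X_{R_k}).
\]
Under standard Cox regularity (bounded covariates, $\nabla_\theta^2 \L_{\text{Cox}}/n \to I(\theta_0) \succ 0$, and $|R_k|/n$ bounded away from zero uniformly over a fixed study horizon), one has $\big\|[\nabla_\theta^2 \L_{\text{Cox}}]^{-1}\big\|_{\mathrm{op}} = O_p(1/n)$, $\rho_{k,i} = O_p(1/n)$ uniformly over the relevant $k$, and $\|X_i - \bar X_{R_k}\| = O_p(1)$. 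Summing at most $n$ terms each of order $O_p(1/n^2)$ yields $\sum_{k\in\mathcal{K}_i}\rho_{k,i}^2(X_i - \bar X_{R_k})/(1-\rho_{k,i}) = O_p(1/n)$, and multiplying by $n\cdot O_p(1/n)$ delivers the advertised $O_p(1/n)$ bound.

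The hard part is the uniform control of $\rho_{k,i}$: as $Y_k$ nears the largest observed time, $|R_k|$ shrinks and $\rho_{k,i}$ is no longer $O(1/n)$. The standard workaround restricts attention to a horizon $\tau$ with $\inf_{t\le\tau}\Pr(Y>t)>0$, under which $|R_k|/n$ is uniformly bounded below; alternatively one bounds the late-time contributions separately, using the fact that only $o_p(n)$ events can lie there. A minor secondary issue is that Reid's ``Hessian'' is a functional second derivative that differs from VIF's observed Hessian by the removal of a single point, an $O_p(1/n)$ perturbation absorbed by the same argument.
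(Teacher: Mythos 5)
Your proposal follows essentially the same route as the paper's proof: write out Reid's empirical influence function and the VIF explicitly at $\htheta$, observe that they share the leading score term and differ only in the at-risk-set correction (where the leave-one-out denominator $S_k - w_i$ replaces Reid's $S_k$, a per-summand discrepancy of order $\rho_{k,i}^2$), and bound the accumulated difference by $O_p(1/n)$ under a truncation horizon keeping $S_n^{(0)}$ bounded below---precisely the paper's Condition (3). Two small points: your claimed exact identity for the difference omits the $k=i$ summand that Reid's integral term contains but the leave-one-out sum does not (the paper carries it as a separate, also $O_p(1/n)$, term), and the uniform control of $\rho_{k,i}$ and $S_n^{(0)}$ that you assert is established in the paper via a Glivenko--Cantelli/empirical-process argument rather than assumed outright.
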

\Cref{thm:informal-cox} suggests that the approximation error of the VIF vanishes when the training data size is large. A formal statement of this result and its proof can be found in \Cref{appendix:Cox-formula}.

\section{Experiments}\label{sec:exp}
\subsection{Experimental setup}\label{ssec:experiment-setup}
We conduct experiments on three examples listed in \Cref{sec:non-decomposable-loss}: Cox Regression, Node Embedding, and Listwise Learning-to-Rank. In this section, we present the performance and runtime of VIF compared to brute-force LOO retraining. We also provide two case studies to demonstrate how the influence estimated by VIF can help interpret the behavior of the trained model.

\paragraph{Datasets and Models.} We evaluate our approach on multiple datasets across different scenarios. For Cox Regression, we use the METABRIC and SUPPORT datasets~\citep{katzman2018deepsurv}. For both of the datasets, we train a Cox model using the negative log partial likelihood following \Cref{eq:partial-likelihood}. For Node Embedding, we use Zachary’s Karate network~\citep{zachary1977information} and train a DeepWalk model~\citep{Perozzi2014-ci}. Specifically, we train a two-layer model with one embedding layer and one linear layer optimized via contrastive loss following \Cref{eq:contrastive-loss}, where the loss is defined as the negative log softmax. For Listwise Learning-to-Rank, we use the Delicious~\citep{tsoumakas2008effective} and Mediamill~\citep{snoek2006challenge} datasets. We train a linear model using the loss defined in \Cref{eq:listmle}. Please refer to Appendix~\ref{app:detailed-exp} for more detailed experiment settings.

\paragraph{Target Functions.} We apply VIF to estimate the change of a target function, \(f(z, \theta)\), before and after a specific data object is excluded from the model training process. Below are our choice of target functions for difference scenarios.
\begin{itemize}
    \item For Cox Regression, we study how the relative risk function, \(f(x_{test}, \theta) = \exp(\theta^\top x_{test})\), of a test object, \(x_{test}\), would change if one training object were removed.
    \item For Node Embedding, we study how the contrastive loss, \(f((u, v, N), \theta) = l(\theta; (u, v, N))\), of an arbitrary pair of test nodes, \((u, v)\), would change if a node \(w \in N\) were removed from the graph.
    \item For Listwise Learning-to-Rank, we study how the ListMLE loss of a test query, \(f((x_{test}, y_{test}^{[k]}), \theta) = - \sum_{j=1}^k \left( f(x_{test}; \theta)_j - \log \sum_{l\in [n]\setminus \{y_{test}^{(1)}, \ldots, y_{test}^{(j-1)}\}} \exp(f(x_{test}; \theta)_l)\right)\), would change if one item $l \in [n]$ were removed from the training process.
\end{itemize}

\subsection{Performance}
\begin{table}[h]
\centering
\caption{The Pearson correlation coefficients of VIF and brute-force LOO retraining under different experimental settings. Specifically, ``Brute-Force'' refers to the results of two times of brute-force LOO retraining using different random seeds, which serves as a reference upper limit of performance.}
\label{table:pearson}
\resizebox{0.7\textwidth}{!}{%
\begin{tabular}{|l|l|l|l|}
\toprule
Scenario                           & Dataset                       & Method      & Pearson Correlation \\ \midrule
\multirow{4}{*}{Cox Regression}          & \multirow{2}{*}{METABRIC}     & VIF         & 0.997                           \\ \cmidrule{3-4} 
                                   &                               & Brute-Force & 0.997                           \\ \cmidrule{2-4} 
                                   & \multirow{2}{*}{SUPPORT}      & VIF         & 0.943                           \\ \cmidrule{3-4} 
                                   &                               & Brute-Force & 0.955                           \\ \midrule
\multirow{2}{*}{Node Embedding} & \multirow{2}{*}{Karate} & VIF         & 0.407                           \\ \cmidrule{3-4} 
                                   &                               & Brute-Force & 0.419                           \\ \midrule
\multirow{4}{*}{Listwise Learning-to-Rank}         & \multirow{2}{*}{Mediamill}    & VIF         & 0.823                           \\ \cmidrule{3-4} 
                                   &                               & Brute-Force & 0.999                           \\ \cmidrule{2-4} 
                                   & \multirow{2}{*}{Delicious} & VIF         & 0.906                           \\ \cmidrule{3-4} 
                                   &                               & Brute-Force & 0.999                           \\ \bottomrule
\end{tabular}%
}
\end{table}

We utilize the Pearson correlation coefficient to quantitatively evaluate how closely the influence estimated by VIF aligns with the results obtained by brute-force LOO retraining. Furthermore, as a reference upper limit of performance, we evaluate the correlation between two brute-force LOO retraining with different random seeds. As noted in \Cref{remark:randomness}, some examples like contrastive losses are not deterministic, which could impact the observed correlations.

Table~\ref{table:pearson} presents the Pearson correlation coefficients comparing VIF with brute-force LOO retraining using different random seeds. The performance of VIF matches the brute-force LOO in all experimental settings. Except for the Node Embedding scenario, the Pearson correlation coefficients are close to 1, indicating a strong resemblance between the VIF estimates and the retraining results. In the Node Embedding scenario, the correlations are moderately high for both methods due to the inherent randomness in the random walk procedure for constructing the triplet set in the DeepWalk algorithm. Nevertheless, VIF achieves a correlation that is close to the upper limit by brute-force LOO retraining.

\subsection{Runtime}
We report the runtime of VIF and brute-force LOO retraining in Tabel~\ref{table:runtime}. The computational advantage of VIF is significant, reducing the runtime by factors up to $1097\times$. This advantage becomes more pronounced as the dataset size increases. The improvement ratio on the Karate dataset is moderate due to the overhead from the random walk process and potential optimizations in the implementation. All runtime measurements were recorded using an Intel(R) Xeon(R) Gold 6338 CPU. 

\begin{table}[h]
\centering
\vspace{-5pt}
\caption{Runtime comparison of VIF and brute-force LOO retraining.}
\label{table:runtime}
\resizebox{0.7\textwidth}{!}{%
\begin{tabular}{|l|l|l|l|c|}
\toprule
Senario                    & Dataset   & Brute-Force & VIF      & Improvement Ratio \\ \midrule
\multirow{2}{*}{Cox Regression} & METABRIC  & 24 min      & 2.43 sec & 593$\times$    \\ \cmidrule{2-5} 
                           & SUPPORT   & 225 min     & 12.3 sec & 1097$\times$    \\ \midrule
Network Embedding          & Karate    & 204 min     & 109 min  & 1.87$\times$   \\ \midrule
\multirow{2}{*}{Listwise Learning-to-Rank} & Mediamill & 52 min      & 2.6 min  & 20$\times$      \\ \cmidrule{2-5} 
                           & Delicious      & 660 min     & 2.8 min  & 236$\times$    \\ \bottomrule
\end{tabular}%
}
\end{table}

\subsection{Case Studies}
We present two case studies to show how the influence estimated by VIF can help interpret the behavior of the trained model.

\paragraph{Case study 1: Cox Regression.} In Table~\ref{table:cox-case}, we show the top-5 most influential training samples, as estimated by VIF, for the relative risk function of two randomly selected test samples. We observe that removing two types of data samples in training will significantly increase the relative risk function of a test sample: (1) training samples that share similar features with the test sample and have long survival times (e.g., training sample ranks 1, 3, 4, 5 for test sample 0 and ranks 5 for test sample 1) and (2) training samples that differ in features from the test sample and have short survival times (e.g., training sample ranks 2 for test sample 0 and ranks 1, 2, 3, 4 for test sample 1). These findings align with domain knowledge.

\begin{table}[h]
\centering
\caption{The top-5 influential training samples to 2 test samples in the METABRIC dataset. ``Features Similarity'' is the cosine similarity between the feature of the influential training sample and the test sample. ``Observed Time'' and ``Event Occurred'' are the $Y$ and $\Delta$ of the influential training sample as defined in \Cref{eq:partial-likelihood}.}
\label{table:cox-case}
\resizebox{\textwidth}{!}{%
\begin{tabular}{l|lll|lll}
\toprule
\multirow{2}{*}{Influence Rank} & \multicolumn{3}{l|}{Test Sample 0}                                                                        & \multicolumn{3}{l}{Test Sample 1}                                                                        \\ \cmidrule{2-7} 
                                                        & \multicolumn{1}{l|}{Feature Similarity} & \multicolumn{1}{l|}{Observed Time} & Event Occurred & \multicolumn{1}{l|}{Feature similarity} & \multicolumn{1}{l|}{Observed time} & Event occurred \\ \midrule
1                                                       & \multicolumn{1}{l|}{0.84}               & \multicolumn{1}{l|}{322.83}        & False          & \multicolumn{1}{l|}{-0.49}              & \multicolumn{1}{l|}{16.57}         & True              \\ \midrule
2                                                       & \multicolumn{1}{l|}{-0.34}              & \multicolumn{1}{l|}{9.13}          & True           & \multicolumn{1}{l|}{-0.22}              & \multicolumn{1}{l|}{30.97}         & True              \\ \midrule
3                                                       & \multicolumn{1}{l|}{0.77}               & \multicolumn{1}{l|}{258.17}        & True           & \multicolumn{1}{l|}{-0.39}              & \multicolumn{1}{l|}{15.07}         & True              \\ \midrule
4                                                       & \multicolumn{1}{l|}{0.23}               & \multicolumn{1}{l|}{131.27}        & False          & \multicolumn{1}{l|}{-0.65}              & \multicolumn{1}{l|}{4.43}          & True              \\ \midrule
5                                                       & \multicolumn{1}{l|}{0.81}               & \multicolumn{1}{l|}{183.43}        & False          & \multicolumn{1}{l|}{0.72}               & \multicolumn{1}{l|}{307.63}        & False              \\ \bottomrule
\end{tabular}%
}
\end{table}

\paragraph{Case study 2: Node Embedding.} in Figure~\ref{fig:node_12_10} and \ref{fig:node_15_13}, we show the influence of all nodes to the contrastive loss of 2 pairs of test nodes. The spring layout of the Karate dataset is provided in Figure~\ref{fig:karate_club}. We observe that the most influential nodes (on the top right in Figure~\ref{fig:node_12_10} and \ref{fig:node_15_13}) are the hub nodes that lie on the shortest path of the pair of test nodes. For example, the shortest path from {\color[RGB]{100,232,100} node 12} to {\color[RGB]{100,232,100} node 10} passes through {\color[RGB]{250, 160, 176} node 0}, while the shortest path from {\color[RGB]{100,232,100} node 15} to {\color[RGB]{100,232,100} node 13} passes through {\color[RGB]{250, 160, 176} node 33}. Conversely, the nodes with the most negative influence (on the bottom left in Figure~\ref{fig:node_12_10} and \ref{fig:node_15_13}) are those that likely ``distract'' the random walk away from the test node pairs. For instance, {\color[RGB]{204, 204, 0} node 3} distracts the walk from {\color[RGB]{100,232,100} node 12} to {\color[RGB]{100,232,100} node 10}, and {\color[RGB]{204, 204, 0} node 30} distracts the walk from {\color[RGB]{100,232,100} node 15} to {\color[RGB]{100,232,100} node 13}.

\begin{figure}[h]
    \centering
    \begin{minipage}{0.31\textwidth}
        \centering
        \includegraphics[width=\linewidth]{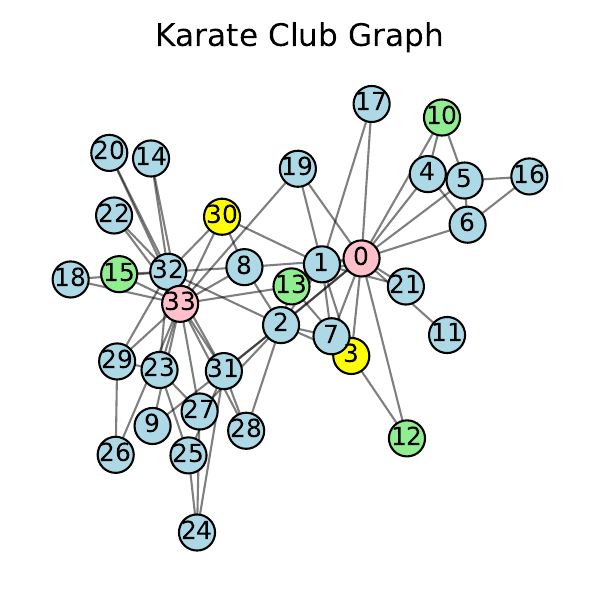}
        \vspace{-20pt}
        \subcaption{Karate Club Graph}
        \label{fig:karate_club}
    \end{minipage}
    \begin{minipage}{0.31\textwidth}
        \centering
        \includegraphics[width=\linewidth]{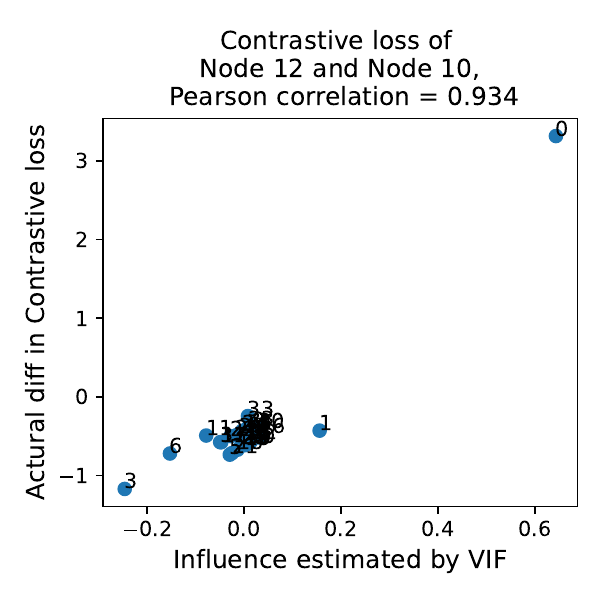}
        \vspace{-20pt}
        \subcaption{Node 12 and Node 10}
        \label{fig:node_12_10}
    \end{minipage}
    \begin{minipage}{0.31\textwidth}
        \centering
        \includegraphics[width=\linewidth]{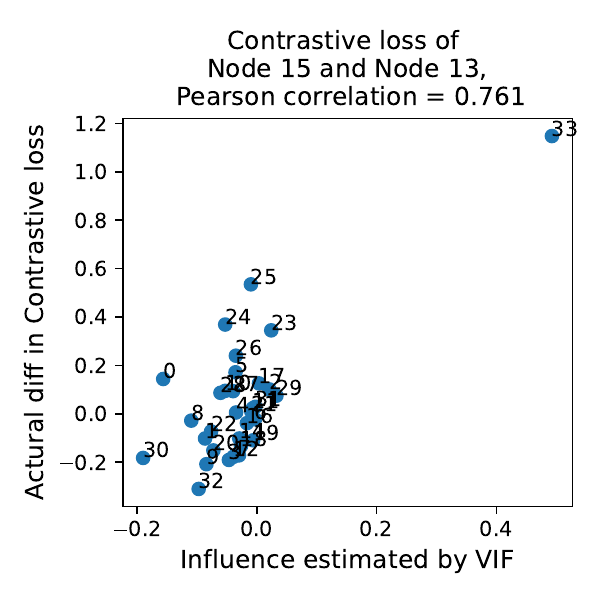}
        \vspace{-20pt}
        \subcaption{Node 15 and Node 13}
        \label{fig:node_15_13}
    \end{minipage}
    \caption{VIF is applied to Zachary’s Karate network to estimate the influence of each node on the contrastive loss of a pair of test nodes. Figure~\ref{fig:karate_club} is a spring layout of the Karate network. Figure~\ref{fig:node_12_10} and Figure~\ref{fig:node_15_13} illustrate the alignment between the influence estimated by VIF (x-axis) and the brute-force LOO retrained loss difference (y-axis).}
    \label{fig:network-embedding-case}
    \vspace{-15pt}
\end{figure}

\section{Conclusion}\label{sec:conclusion}

In this work, we introduced the Versatile Influence Function (VIF), a novel method that extends IF-based data attribution to models trained with non-decomposable losses. The key idea behind VIF is a finite difference approximation of the general IF formulation in the statistics literature, which eliminates the need for case-specific derivations and can be efficiently computed with the auto-differentiation tools provided in modern machine learning libraries. Our theoretical analysis demonstrates that VIF accurately recovers classical influence functions in the case of M-estimators and provides strong approximations for more complex settings such as Cox regression. Empirical evaluations across various tasks show that VIF closely approximates the influence obtained by brute-force leave-one-out retraining while being orders-of-magnitude faster. By broadening the scope of IF-based data attribution to non-decomposable losses, VIF opens new avenues for data-centric applications in machine learning, empowering practitioners to explore data attribution in more complex and diverse domains.

\paragraph{Limitation and Future Work.} Similar to early IF-based methods for decomposable loss~\citep{Koh2017-ov}, the formal derivation of VIF assumes convexity of the loss function, which requires practical tricks to adapt the proposed method to large-scale neural network models. While we have explored the application of Conjugate Gradient and LiSSA~\citep{agarwal2017second} for efficient inverse Hessian approximation (see \Cref{appendix:efficient-approximation}), more advanced techniques to stabilize and accelerate IF-based methods developed for decomposable losses, such as EK-FAC~\citep{Grosse2023-pu}, ensemble~\citep{park2023trak}, or gradient projection~\citep{choe2024your}, may be adapted to further enhance the practical applicability of VIF on large-scale models.

\bibliography{reference}
\bibliographystyle{plainnat}

\clearpage
\appendix
\section{Omitted Derivations}\label{appendix:derivation}

\subsection{Derivation of \Cref{eq:IF-general}}\label{appendix:IF-general-derivation}

Consider an $\varepsilon$ perturbation towards another distribution $Q$, i.e., $(1-\varepsilon)P+\varepsilon Q$. Note that $\ttheta((1-\varepsilon)P+\varepsilon Q)$ solves $\nabla_\theta \tL(\theta, (1-\varepsilon)P+\varepsilon Q)=0$. We take derivative with respect to $\varepsilon$ and evaluate at $\varepsilon=0$ on both side, which leads to 
\[\nabla_{\theta}^2 \tL(\ttheta(P),P)\lim_{\varepsilon \rightarrow 0}\frac{\ttheta((1-\varepsilon)P+\varepsilon Q) - \ttheta(P)}{\varepsilon}+ \lim_{\varepsilon \rightarrow 0}\frac{\nabla_\theta \tL(\ttheta(P),(1-\varepsilon)P+\varepsilon Q) - \nabla_\theta \tL(\ttheta(P), P) }{\varepsilon}=0.\]
Given the strict convexity, the Hessian is invertible at the global optimal. By plugging the definition of IF, we have 
\[IF(\ttheta(P); Q)= -\left[\nabla_{\theta}^2 \tL(\ttheta(P), P) \right]^{-1} \lim_{\varepsilon \rightarrow 0} \frac{\nabla_{\theta}\tL(\ttheta(P), (1-\varepsilon) P + \varepsilon Q) - \nabla_{\theta}\tL(\ttheta(P), P)}{\varepsilon}. \]

\subsection{Proof of \Cref{lemma:IF-M}}\label{appendix:IF-M-proof}

\begin{proof}
Under M-estimation,  the objective function becomes the empirical loss, i.e., $\tL(\theta, P)=\mathbb{E}_{z\sim P}[\ell (\theta; z)]$, where $P = \mathbb{P}_n =\sum_{i=1}^n \delta_{z_i}/n $ is the empirical distribution over the dataset. Note that \(\tL(\theta, P) = \frac{1}{n} \LD(\theta, \one)\) for any \(\theta\), therefore they share the same minimizer, i.e.,
\[
    \ttheta(P) = \hthetaD(\one).
\]

The gradient and Hessian of \(\tL(\ttheta(P), P)\) are respectively
\[\nabla_\theta \tL(\ttheta(P),P) = \mathbb{E}_{z\sim P}[\nabla_{\theta}\ell(\ttheta(P);z)] = \frac{1}{n}\sum_{j=1}^n \nabla_{\theta}\ell(\ttheta(P); z_j) =0 \]
and 
\[\nabla_\theta^2 \tL(\ttheta(P), P) = \mathbb{E}_{z\sim P}[\nabla_{\theta}^2 \ell(\ttheta(P);z)] = \sum_{i=1}^n \nabla_{\theta}^2 \ell(\ttheta(P);z_i) / n = \frac{1}{n} \nabla_{\theta}^2 \LD(\hthetaD(\one), \one).\]
The infinitesimal change on the gradient towards the distribution $Q=\delta_{z_i}$ equals to
\begin{align*}
    & \lim_{\varepsilon \rightarrow 0} \frac{\nabla_{\theta}\tL(\ttheta(P), (1-\varepsilon) P + \varepsilon Q) - \nabla_{\theta}\tL(\ttheta(P), P)}{\varepsilon} \\
    = & \lim_{\varepsilon \rightarrow 0} \frac{ \mathbb{E}_{z\sim (1-\varepsilon) P + \varepsilon Q} [\nabla_{\theta}\ell(\ttheta(P), z)] - 0}{\varepsilon}\\
 = & \lim_{\varepsilon \rightarrow 0} \frac{ (1-\varepsilon) \mathbb{E}_{z\sim  P} [\nabla_{\theta}\ell(\ttheta(P), z)] + \varepsilon \mathbb{E}_{z\sim  Q} [\nabla_{\theta}\ell(\ttheta(P), z)] }{\varepsilon}\\
  = & \lim_{\varepsilon \rightarrow 0} \frac{ (1-\varepsilon) \cdot 0 + \varepsilon \mathbb{E}_{z\sim  Q} [\nabla_{\theta}\ell(\ttheta(P), z)] }{\varepsilon}\\
   = & \mathbb{E}_{z\sim  Q} [\nabla_{\theta}\ell(\ttheta(P), z)]  \\
   = & \nabla_{\theta}\ell(\ttheta(P), z_i) = \nabla_{\theta}\ell(\hthetaD(\one), z_i).
\end{align*}
Plugging the above equations into \Cref{eq:IF-general}, it becomes
\[\IF(\ttheta(\mathbb{P}_n); \delta_{z_i}) = -n \left[\nabla_{\theta}^2 \LD(\hthetaD(\one), \one)\right]^{-1} \nabla_{\theta} \ell(\hthetaD(\one); z_i).\]
\end{proof}

\subsection{Proof of \Cref{thm:finite-difference-IF-M}}
\begin{lemma}\label{lemma:P-Q}
Let \(\mathbb{P}_n\) and \(\mathbb{Q}^{(-i)}_{n-1}\) be the empirical distributions respectively on \(\{z_j\}_{j=1}^n\) and \(\{z_j\}_{j=1}^n\setminus \{z_i\}\), while \(\delta_{z_i}\) is the distribution concentrated on \(z_i\). Then
    \[(1 + \frac{1}{n-1})\mathbb{P}_n - \frac{1}{n-1}\delta_{z_i} = \mathbb{Q}^{(-i)}_{n-1}.\]
\end{lemma}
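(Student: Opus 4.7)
The statement is a purely measure-theoretic identity among three discrete distributions supported on $\{z_1,\ldots,z_n\}$, so my plan is to reduce it to a one-line decomposition-and-rearrangement argument rather than a case-by-case atom check.

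First I would observe that the empirical measure $\mathbb{P}_n$ admits the exact decomposition
\[
\mathbb{P}_n \;=\; \frac{n-1}{n}\,\mathbb{Q}^{(-i)}_{n-1} \;+\; \frac{1}{n}\,\delta_{z_i},
\]
which follows immediately from the definitions: the $(n-1)$ atoms other than $z_i$ each receive mass $\tfrac{n-1}{n}\cdot\tfrac{1}{n-1}=\tfrac{1}{n}$ from the first term and $0$ from the second, while $z_i$ receives mass $0+\tfrac{1}{n}=\tfrac{1}{n}$; this matches $\mathbb{P}_n$ on every atom and hence everywhere.

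Next I would simply solve this linear relation for $\mathbb{Q}^{(-i)}_{n-1}$: multiplying through by $\tfrac{n}{n-1}$ and rearranging gives
\[
\mathbb{Q}^{(-i)}_{n-1} \;=\; \frac{n}{n-1}\,\mathbb{P}_n \;-\; \frac{1}{n-1}\,\delta_{z_i} \;=\; \Bigl(1+\frac{1}{n-1}\Bigr)\mathbb{P}_n \;-\; \frac{1}{n-1}\,\delta_{z_i},
\]
which is exactly the claimed identity.

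There is no real obstacle here; the only subtlety worth flagging in the write-up is that the right-hand side is a signed combination of probability measures (not itself a probability measure, since $\delta_{z_i}$ enters with a negative coefficient), so the equality should be understood as an identity of signed measures on the finite support $\{z_1,\ldots,z_n\}$. Once that is noted, the decomposition above makes the verification essentially immediate and avoids any reliance on pointwise atom-by-atom bookkeeping.
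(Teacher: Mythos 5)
Your proof is correct and is essentially the same argument as the paper's: the paper verifies the claimed identity directly on each atom (the case $j\neq i$ and the case $j=i$), while you verify the equivalent mixture identity $\mathbb{P}_n = \frac{n-1}{n}\mathbb{Q}^{(-i)}_{n-1} + \frac{1}{n}\delta_{z_i}$ on each atom and then rearrange, which is the same two-case mass computation packaged differently. (Your remark that this ``avoids pointwise atom-by-atom bookkeeping'' is not quite accurate, since your first step is itself justified by exactly that bookkeeping; the signed-measure caveat you flag is a fair observational point but is not needed for the identity, which is just an equality of set functions on a finite support.)
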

\begin{proof}[Proof of \Cref{lemma:P-Q}]
For any \(j\neq i\),
\begin{align*}
    (1 + \frac{1}{n-1}) \mathbb{P}_n(z_j) - \frac{1}{n - 1}\delta_{z_i}(z_j) &= (1 + \frac{1}{n-1}) \cdot \frac{1}{n} - 0\\
    &= \frac{1}{n-1} \\
    &= \mathbb{Q}^{(-i)}_{n-1}(z_j).
\end{align*}
For \(i\), 
\begin{align*}
    (1 + \frac{1}{n-1}) \mathbb{P}_n(z_i) - \frac{1}{n - 1}\delta_{z_i}(z_i) &= (1 + \frac{1}{n-1}) \cdot \frac{1}{n} - \frac{1}{n - 1} \cdot 1\\
    &= 0 \\
    &= \mathbb{Q}^{(-i)}_{n-1}(z_i).
\end{align*}
\end{proof}
\begin{proof}[Proof of \Cref{thm:finite-difference-IF-M}]
We first prove the first part of \Cref{thm:finite-difference-IF-M}, where our goal is to show
\[\FDIF_{-\frac{1}{n-1}}(\ttheta(\mathbb{P}_n); \delta_{z_i}) = -n \left[\nabla_{\theta}^2 \LD(\hthetaD(\one), \one)\right]^{-1} \nabla_{\theta} \ell(\hthetaD(\one); z_i). \]
Expanding \(\FDIF_{\varepsilon}(\ttheta(\mathbb{P}_n); \delta_{z_i})\) by its definition in \Cref{eq:IF-finite-difference},
\begin{align*}
    \FDIF_{\varepsilon}(\ttheta(\mathbb{P}_n); \delta_{z_i}) &= -\left[\nabla_{\theta}^2 \tL(\ttheta(\mathbb{P}_n), \mathbb{P}_n) \right]^{-1} \frac{\nabla_{\theta}\tL(\ttheta(\mathbb{P}_n), (1-\varepsilon) \mathbb{P}_n + \varepsilon \delta_{z_i}) - \nabla_{\theta}\tL(\ttheta(\mathbb{P}_n), \mathbb{P}_n)}{\varepsilon}.
\end{align*}
Setting \(\varepsilon=-\frac{1}{n-1}\) and by \Cref{lemma:P-Q},
\begin{align}
    \FDIF_{-\frac{1}{n-1}}(\ttheta(\mathbb{P}_n); \delta_{z_i}) &= -\left[\nabla_{\theta}^2 \tL(\ttheta(\mathbb{P}_n), \mathbb{P}_n) \right]^{-1} \frac{\nabla_{\theta}\tL(\ttheta(\mathbb{P}_n), \mathbb{Q}^{(-i)}_{n-1}) - \nabla_{\theta}\tL(\ttheta(\mathbb{P}_n), \mathbb{P}_n)}{-1/(n-1)} \label{eq:intermediate-IF-Q-n-1} \\
    &= -\left[\nabla_{\theta}^2 \tL(\ttheta(\mathbb{P}_n), \mathbb{P}_n) \right]^{-1} \frac{\E_{z\sim \mathbb{Q}^{(-i)}_{n-1}}[\nabla_{\theta}\ell(\ttheta(\mathbb{P}_n);z)] - \E_{z\sim \mathbb{P}_n}[\nabla_{\theta}\ell(\ttheta(\mathbb{P}_n);z)]}{-1/(n-1)} \nonumber \\
    &= -\left[\nabla_{\theta}^2 \tL(\ttheta(\mathbb{P}_n), \mathbb{P}_n) \right]^{-1} \frac{\sum_{j=1, j\neq i}^n\nabla_{\theta}\ell(\ttheta(\mathbb{P}_n);z_j)/(n-1) - \sum_{j=1}^n\nabla_{\theta}\ell(\ttheta(\mathbb{P}_n);z_j)/n}{-1/(n-1)} \nonumber \\
    &= -\left[\nabla_{\theta}^2 \tL(\ttheta(\mathbb{P}_n), \mathbb{P}_n) \right]^{-1} \left[-\sum_{j=1, j\neq i}^n\nabla_{\theta}\ell(\ttheta(\mathbb{P}_n);z_j) + \frac{n-1}{n}\sum_{j=1}^n\nabla_{\theta}\ell(\ttheta(\mathbb{P}_n);z_j)\right]. \label{eq:intermediate-IF-n-1}
\end{align}
Noting that \(\ttheta(\mathbb{P}_n)\) is the optimizer for \(\tL(\theta, \mathbb{P}_n)\), so
\[0 = \nabla_{\theta} \tL(\ttheta(\mathbb{P}_n), \mathbb{P}_n) = \frac{1}{n}\sum_{j=1}^n\nabla_{\theta}\ell(\ttheta(\mathbb{P}_n);z_j).\]
Therefore, 
\[-\sum_{j=1, j\neq i}^n\nabla_{\theta}\ell(\ttheta(\mathbb{P}_n);z_j) = \nabla_{\theta}\ell(\ttheta(\mathbb{P}_n);z_i).\]
Plugging the two equations above into \Cref{eq:intermediate-IF-n-1}, we have
\begin{align*}
    \FDIF_{-\frac{1}{n-1}}(\ttheta(\mathbb{P}_n); \delta_{z_i}) &= -\left[\nabla_{\theta}^2 \tL(\ttheta(\mathbb{P}_n), \mathbb{P}_n) \right]^{-1} \nabla_{\theta}\ell(\ttheta(\mathbb{P}_n);z_i).
\end{align*}
From the proof of \Cref{lemma:IF-M} in \Cref{appendix:IF-M-proof}, we know
\[\ttheta(\mathbb{P}_n) = \hthetaD(\one), \quad  \nabla_{\theta}^2 \tL(\ttheta(\mathbb{P}_n), \mathbb{P}_n) = \frac{1}{n}\nabla_{\theta}^2 \LD(\hthetaD(\one), \one).\]
Therefore, 
\[\FDIF_{-\frac{1}{n-1}}(\ttheta(\mathbb{P}_n); \delta_{z_i}) = -n \left[\nabla_{\theta}^2 \LD(\hthetaD(\one), \one)\right]^{-1} \nabla_{\theta} \ell(\hthetaD(\one); z_i), \]
which completes the proof for the first part of \Cref{thm:finite-difference-IF-M}.
For the second part, the first equation has been proved as \Cref{lemma:P-Q}. The second equation is straightforward from \Cref{eq:intermediate-IF-Q-n-1}:
\begin{align*}
    \FDIF_{-\frac{1}{n-1}}(\ttheta(\mathbb{P}_n); \delta_{z_i}) &= -\left[\nabla_{\theta}^2 \tL(\ttheta(\mathbb{P}_n), \mathbb{P}_n) \right]^{-1} \frac{\nabla_{\theta}\tL(\ttheta(\mathbb{P}_n), \mathbb{Q}^{(-i)}_{n-1}) - \nabla_{\theta}\tL(\ttheta(\mathbb{P}_n), \mathbb{P}_n)}{-1/(n-1)} \\
    &= -(n-1) \left(-\left[\nabla_{\theta}^2 \tL(\ttheta(\mathbb{P}_n), \mathbb{P}_n) \right]^{-1} \frac{\nabla_{\theta}\tL(\ttheta(\mathbb{P}_n), \mathbb{Q}^{(-i)}_{n-1}) - \nabla_{\theta}\tL(\ttheta(\mathbb{P}_n), \mathbb{P}_n)}{1}\right) \\
    &= -(n-1)\FDIF_{1}(\ttheta(\mathbb{P}_n); \mathbb{Q}^{(-i)}_{n-1}).
\end{align*}
\end{proof}
\subsection{Proof of \Cref{thm:finite-difference-IF-non-decomposable}}
\begin{proof}
It is easy to verify that
\[b^{\mathbb{P}_n} = \one, \quad b^{\mathbb{Q}^{(-i)}_{n-1}} = \om{i}.\]
Hence, based on the definition of \(\tL\) in \Cref{thm:finite-difference-IF-non-decomposable}, we have
\[\tL(\theta, \mathbb{P}_n) = \L(\theta, \one), \quad \tL(\theta, \mathbb{Q}^{(-i)}_{n-1}) = \L(\theta, \om{i}).\]
Therefore, we also have \(\ttheta(\mathbb{P}_n) = \htheta(\one).\)
The result in \Cref{eq:IF-approximation} follows directly by plugging these quantities into the definition of \(\FDIF_{1}(\ttheta(\mathbb{P}_n); \mathbb{Q}^{(-i)}_{n-1})\).
\end{proof}

\subsection{Formal Statement and Proof of \Cref{thm:informal-cox}}\label{appendix:Cox-formula}

\paragraph{Setup and Notation.} For convenience, we adopt a set of slightly different notations tailored for the Cox regression model.
Consider $n$ i.i.d.\ generated right-censoring data $\{Z_i = (X_i, Y_i, \Delta_i)\}^{n}_{i=1}$, where $Y_i = min\{T_i, C_i\}$ is the observed time, $T_i$ is the time to event of interest, and $C_i$ is the censoring time. We assume non-informative censoring, i.e., $T$ and $C_i$ are independent conditional on $X$, which is a common assumption in the literature. Suppose there are no tied events for simplicity.

A well-known estimate for the coefficients $\beta$ under the Cox model is obtained by minimizing the negative log-partial likelihood: 
\begin{align*}
    \L_n(\theta)& :=-\sum_{i=1}^n \Delta_i\left(\theta^{\top} X_i-\log \left( \sum_{j \in R_i} \exp \left(\theta^{\top} X_j\right)\right)\right) \\
    & = -\sum_{i=1}^n \Delta_i\left(\theta^{\top} X_i-\log \left(\sum_{j=1}^n I(Y_j \geq Y_i)\exp \left(\theta^{\top} X_j\right)\right)\right).
\end{align*}
Note that $\L_n(\theta)$ is a convex function and the estimate $\htheta$ equivalently solves the following \textit{score equation}:
\[\nabla_{\theta} \L_n(\htheta) =  \sum_{i=1}^n \underbrace{- \Delta_i \left( X_i 
 - \frac{S_n^{(1)}(Y_i ; \htheta)}{S_n^{(0)}(Y_i ; \htheta)} \right)}_{\nabla_\theta \ell_n(\htheta; Z_i)}= 0,\]
where 
\begin{align}
    S_n^{(0)}(t ; \theta)  & =\frac{1}{n} \sum_{i=1}^n I\left(Y_i \geq t\right) \exp \left(\theta^\top X_i\right), \label{eq: emp S0} \\
    S_n^{(1)}(t ; \theta)  & =\frac{1}{n} \sum_{i=1}^n I\left(Y_i \geq t\right) \exp \left(\theta^\top X_i\right) X_i. \label{eq: emp S1}
\end{align}
It has been shown that, under some regularity conditions, $\htheta$ is a consistent estimator for $\theta^*$. 
Note that the above score equation is not a simple estimation equation that takes the summation of i.i.d.\ terms, because $S_n^{(0)}(t ; \theta)$ and $S_n^{(1)}(t ; \theta)$ depend on all observations. 

\paragraph{Analytical Form of Influence Function in Statistics.}
\citet{Reid1985-ng} derived the influence function by evaluating the limit in (\ref{eq:IF-general})  with $P$ being the underlying data-generating distribution and $Q=\delta_{Z_i}$ (i.e., the Gateaux derivative at $\theta^*=\theta(P)$ in the direction $\delta_{Z_i}$). To start with, we define the population counterparts of \Cref{eq: emp S0} and \Cref{eq: emp S1}: 
\begin{align*}
     s^{(0)}(t ; \theta)  &=\E \left(I\left(Y \geq t\right) \exp \left(\theta^\top X\right) \right),\\
    s^{(1)}(t ; \theta)  & =\E \left(I\left(Y \geq t\right) \exp \left(\theta^\top X\right)X \right),
\end{align*}
and introduce the counting process notation: the counting process associated with $i$-th data $N_i(t) = I(Y_i \le t, \Delta_i =1)$, the process $G_n(t) = \frac{1}{n} \sum_{i=1}^n N_i(t)$, and its population expectation $G(t) = \E (G_n(t))$.
Then the influence function for the observation $Z_i = (X_i, Y_i, \Delta_i)$ is given by
 \begin{align*}
    \boldsymbol{A} \cdot \operatorname{IF}(i)= & \Delta_i \left( X_i 
 - \frac{s^{(1)}(Y_i ; \theta^*)}{s^{(0)}(Y_i ; \theta^*)} \right) \\
 & - \exp({\theta^*}^\top X_i) \cdot \int \frac{  I( Y_i \ge t)}{s^{(0)}(t ; \theta^*)}\left( X_i 
 - \frac{s^{(1)}(t ; \theta^*)}{s^{(0)}(t ; \theta^*)} \right) d G(t)
\end{align*}
where $\boldsymbol{A}$ is the non-singular information matrix. A consistent estimate for $\boldsymbol{A}$ is given by $\nabla^2_\theta \L_n(\htheta) / n$.
The empirical influence function given $n$ data points is obtained by substituting $\boldsymbol{A}$, $\theta^*$, and $G(t)$ by $\nabla^2_\theta \L(\htheta) / n$, $\htheta$, and $G_n(t)$ respectively:
\begin{align*}
    \operatorname{IF}_n(i) = & - [\nabla^2_\theta \L(\htheta) /n]^{-1} \nabla_{\theta} \ell_n(\htheta; Z_i) - [\nabla^2_\theta \L(\htheta)/n]^{-1} C_i(\htheta) ,
\end{align*}
where 
\begin{align*}
   C_i(\htheta)  & = \exp(\htheta^\top X_i) \cdot  \frac{1}{n}\sum_{j=1}^n   \int \frac{  I( Y_i \ge t)}{S_n^{(0)}(t ; \htheta)}\left( X_i 
 - \frac{S_n^{(1)}(t ; \htheta)}{S_n^{(0)}(t ; \htheta)} \right) d N_j(t) \\
   & =   \exp(\htheta^\top X_i) \cdot  \frac{1}{n}  \sum_{j=1}^n \frac{ I( Y_i \ge Y_j) \Delta_j }{S_n^{(0)}(Y_j ; \htheta)}  \cdot \left( X_i - \frac{  S_n^{(1)}(Y_j ; \htheta) }{S_n^{(0)}(Y_j ; \htheta)} \right) .
\end{align*}
The first term is analogous to the standard influence function for M-estimators and the second term captures the influence of the $i$-th observation in the at-risk set.

\paragraph{The Proposed VIF.} Under the Cox regression, the proposed VIF becomes 
\begin{align*}
    \operatorname{VIF}_n(i):=- \left[\nabla_\theta^2 \L_n(\htheta)/n\right]^{-1} \left( \nabla_\theta \L_n(\htheta)-\nabla_\theta \L_{n-1}^{(-i)}(\htheta)\right),
\end{align*}
where $\nabla_\theta \L_{n-1}^{(-i)}(\htheta)$ is the gradient of the negative log-partial likelihood after excluding the $i$-th data point at $\htheta$. Given no tied events, we can rewrite $\nabla_\theta \L_{n-1}^{(-i)}(\htheta)$ as 
\begin{align*}
    \nabla_\theta \L_{n-1}^{(-i)}(\htheta) =  & - \sum_{j: Y_j <  Y_i} \Delta_j \left( X_j 
 - \frac{ S_n^{(1)}(Y_j ; \htheta) -  \exp(\htheta^\top X_i) X_i /n}{  S_n^{(0)}(Y_j ; \htheta) - \exp(\htheta^\top X_i)/n} \right) - \sum_{j: Y_j > Y_i}\Delta_j \left( X_j 
 - \frac{S_n^{(1)}(Y_j ; \htheta)}{S_n^{(0)}(Y_j ; \htheta)} \right).
\end{align*}
Then it follows that 
\begin{align*}
     \operatorname{VIF}_n(i)= & - [\nabla^2_\theta \L_n(\htheta)/n]^{-1} \left( \nabla_\theta \L_n(\htheta)-\nabla_\theta \L_{n-1}^{(-i)}(\htheta) \right) \\
     = & - [\nabla^2_\theta \L_n(\htheta)/n]^{-1} \left(  \nabla_{\theta} \ell_n(\htheta; Z_i)  + \sum_{j: Y_j <  Y_i} \Delta_j \left( \frac{S_n^{(1)}(Y_j ; \htheta)}{S_n^{(0)}(Y_j ; \htheta)} 
 - \frac{S_n^{(1)}(Y_j ; \htheta) -  \exp(\htheta^\top X_i) X_i /n }{S_n^{(0)}(Y_j ; \htheta) - \exp(\htheta^\top X_i)/n} \right) \right)\\
 = & - [\nabla^2_\theta \L_n(\htheta)/n]^{-1}  \nabla_{\theta} \ell_n(\htheta; Z_i)  \\
 & - [\nabla^2_\theta \L_n(\htheta)/n]^{-1} \left( \exp(\htheta^\top X_i) \cdot  \frac{1}{n}  \sum_{j=1}^n  \frac{I(Y_j <  Y_i)\Delta_j }{S_n^{(0)}(Y_j ; \htheta) -\exp(\htheta^\top X_i) /n} \cdot \left( X_i-\frac{ S_n^{(1)}(Y_j ; \htheta)}{S_n^{(0)}(Y_j ; \htheta) } \right)  \right).
\end{align*}

\paragraph{Approximation Error.} Below, we formally bound the difference between the analytical form of IF and our proposed approximation. Our result implies that the difference between the analytic expression of the IF and the proposed VIF approximation, i.e.,  $ \operatorname{VIF}_n(i)- \operatorname{IF}_n(i)$, diminishes at a rate of $1/n$ as the sample size grows and is of a smaller order than $\operatorname{IF}_n(i)$. This is because $ \operatorname{IF}_n(i) = \operatorname{IF}(i)+o_p(1)$, where $\operatorname{IF}(i)$ is a non-degenerate random variable that doesn't converge to zero in probability; therefore $\operatorname{IF}_n(i) $ remains bounded away from zero in probability, denoted as $= \Omega_p(1)$.

\begin{theorem}[Approximation Error Bound under Cox Model] Assume that (1) the true parameter $\theta^*$ is an interior point of a compact set $\mathcal{B} \subset \mathbf{R}^d$; (2) the density of $X$ is bounded below by a constant $c>0$ over its domain $\mathcal{X}$, which is a compact subset of $\mathbf{R}^{d}$; (3) there is a truncation time $\tau<\infty$ such that for some constant $\delta_0$, $\Pr(Y>\tau |X)\geq \delta_0$ almost surely; (4) the information matrix $\boldsymbol{A}$ is non-singular. Assuming uninformative censoring, the difference between $\operatorname{IF}_n(i) $ and $\operatorname{VIF}_n(i)$ is upper bounded by
\[\operatorname{Diff}(i) :=  \operatorname{VIF}_n(i) - \operatorname{IF}_n(i)  = {O}_p(\frac{1}{n}).\]
\end{theorem}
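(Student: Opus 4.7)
The plan is to show that the difference
\[\operatorname{VIF}_n(i) - \operatorname{IF}_n(i) = -[\nabla^2_\theta \L_n(\htheta)/n]^{-1} \bigl(B_i(\htheta) - C_i(\htheta)\bigr)\]
is $O_p(1/n)$, where $C_i(\htheta)$ is the at-risk correction appearing in the classical $\operatorname{IF}_n(i)$ derived above and
\[B_i(\htheta) := \exp(\htheta^\top X_i)\cdot \frac{1}{n}\sum_{j:\,Y_j<Y_i} \frac{\Delta_j}{S_n^{(0)}(Y_j;\htheta) - \exp(\htheta^\top X_i)/n}\left(X_i - \frac{S_n^{(1)}(Y_j;\htheta)}{S_n^{(0)}(Y_j;\htheta)}\right)\]
is the analogous quantity in $\operatorname{VIF}_n(i)$. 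The first observation is that the M-estimator contribution $-[\nabla^2_\theta \L_n(\htheta)/n]^{-1}\nabla_\theta \ell_n(\htheta; Z_i)$ appears identically in both expansions and cancels exactly, so the entire discrepancy is carried by $B_i - C_i$.

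Next, I would decompose $B_i - C_i$ into two interpretable pieces reflecting the two structural differences between the formulas. Under the no-ties assumption, $C_i$ sums over $\{j:Y_j\le Y_i\}$ and thus contains a diagonal $j=i$ term equal to $\exp(\htheta^\top X_i)\Delta_i(X_i-\mu_i)/(nS_n^{(0)}(Y_i;\htheta))$ with $\mu_j := S_n^{(1)}(Y_j;\htheta)/S_n^{(0)}(Y_j;\htheta)$, while $B_i$ sums only over $\{j:Y_j<Y_i\}$. The second source is the perturbed denominator in $B_i$, for which the identity
\[\frac{1}{S_n^{(0)} - \exp(\htheta^\top X_i)/n} - \frac{1}{S_n^{(0)}} = \frac{\exp(\htheta^\top X_i)/n}{S_n^{(0)}\bigl(S_n^{(0)} - \exp(\htheta^\top X_i)/n\bigr)}\]
exhibits each summand as $O_p(1/n)$ pointwise in $j$.

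Then I would install the stochastic bounds. Assumptions (1)--(2) (compactness of $\mathcal{B}$ and $\mathcal{X}$) bound $\exp(\htheta^\top X_i)$ and $\|X_i\|$, and together with a uniform lower bound on $S_n^{(0)}$ they also bound $\|\mu_j\|$. That uniform lower bound $\inf_{t\le\tau,\,\theta\in\mathcal{B}} S_n^{(0)}(t;\theta) \ge c/2$ with probability tending to one follows from assumption (3) together with a Glivenko--Cantelli-type uniform convergence of $S_n^{(0)}(\cdot;\theta)$ to $s^{(0)}(\cdot;\theta)$ on $[0,\tau]\times\mathcal{B}$. With these in hand, the diagonal term is immediately $O_p(1/n)$, and the denominator-perturbation piece is an average of at most $n$ summands each of order $O_p(1/n)$, multiplied by the outer $1/n$ factor, yielding $O_p(1/n)$ overall.

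Finally, by assumption (4) and the consistency of $\htheta$ for $\theta^*$, the normalized Hessian $\nabla^2_\theta \L_n(\htheta)/n$ converges in probability to the non-singular information matrix $\boldsymbol{A}$, so its inverse is $O_p(1)$; combining this with the $O_p(1/n)$ bound on $B_i-C_i$ closes the proof. The main obstacle I anticipate is making the lower bound on $S_n^{(0)}$ genuinely uniform in the random evaluation point $(Y_i,\htheta)$ rather than pointwise; this is where the truncation assumption and a standard empirical-process argument for the counting-process average do the real work, and it is also what keeps the perturbed denominator in $B_i$ from blowing up.
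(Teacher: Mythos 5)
Your proposal is correct and follows essentially the same route as the paper's proof: cancel the shared M-estimator term, isolate the discrepancy into the diagonal $j=i$ term (from the $Y_j\le Y_i$ versus $Y_j<Y_i$ summation range) plus the perturbed-denominator term via the difference-of-reciprocals identity, and control both using the compactness assumptions, a Glivenko--Cantelli uniform lower bound on $S_n^{(0)}$, and the non-singularity of $\boldsymbol{A}$. The only difference is one of refinement: the paper additionally identifies the probability limit of the correction integral $\int J_n\,dG_n$ (via a four-term empirical-process decomposition), thereby pinning down the leading-order constant of the difference, whereas you only establish the $O_p(1/n)$ bound --- which is all the stated theorem requires.
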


\begin{proof}
    The difference between $\operatorname{IF}_n(i) $ and $\operatorname{VIF}_n(i)$ is given by 
\begin{align*}
& \operatorname{Diff}_n(i)   = \operatorname{VIF}_n(i) - \operatorname{IF}_n(i) \\
     & =   \left[\nabla_\theta^2 \L(\htheta ) / n \right]^{-1}\exp(\htheta^\top X_i) \cdot \frac{1}{n} \Bigg\{  \sum_{j=1}^n \frac{ I(Y_j \le Y_i) \Delta_j }{S_n^{(0)}(Y_j ; \htheta)}  \cdot \left( X_i - \frac{  S_n^{(1)}(Y_j ; \htheta) }{S_n^{(0)}(Y_j ; \htheta)} \right)     \\
    & \quad \quad \quad \quad \quad \quad \quad \quad  \quad \quad \quad \quad  \quad \quad -  \sum_{j=1}^n  \frac{I(Y_j <  Y_i)\Delta_j }{S_n^{(0)}(Y_j ; \htheta) -\exp(\htheta^\top X_i) /n} \cdot \left( X_i-\frac{ S_n^{(1)}(Y_j ; \htheta)}{S_n^{(0)}(Y_j ; \htheta) } \right)  \Bigg\} \\
    & =   \left[\nabla_\theta^2 \L(\htheta )/ n\right]^{-1}\exp(\htheta^\top X_i) \cdot \frac{1}{n} \Bigg\{  \sum_{j=1}^n \frac{ I(Y_j \le Y_i) \Delta_j }{S_n^{(0)}(Y_j ; \htheta)}  \cdot \left( X_i - \frac{  S_n^{(1)}(Y_j ; \htheta) }{S_n^{(0)}(Y_j ; \htheta)} \right)    \\
    & \quad \quad \quad \quad \quad \quad \quad \quad  \quad \quad \quad \quad  \quad \quad -  \sum_{j=1}^n  \frac{I(Y_j \leq  Y_i)\Delta_j }{S_n^{(0)}(Y_j ; \htheta) -\exp(\htheta^\top X_i) /n} \cdot \left( X_i-\frac{ S_n^{(1)}(Y_j ; \htheta)}{S_n^{(0)}(Y_j ; \htheta) } \right)  \Bigg\} \\
    & \quad \quad \quad \quad \quad \quad \quad \quad  \quad \quad \quad \quad  \quad \quad +   \frac{\Delta_i }{S_n^{(0)}(Y_i ; \htheta) -\exp(\htheta^\top X_i) /n} \cdot \left( X_i-\frac{ S_n^{(1)}(Y_i ; \htheta)}{S_n^{(0)}(Y_i ; \htheta) } \right)  \Bigg) \\
    & =  - \left[\nabla_\theta^2 \L(\htheta )/ n\right]^{-1} \frac{\exp( 2 \htheta^\top X_i)}{n} \cdot \frac{1}{n}   \sum_{j=1}^n  \Bigg\{ \frac{ I(Y_j \le Y_i) \Delta_j }{S_n^{(0)}(Y_j ; \htheta)}  \cdot \frac{1}{S_n^{(0)}(Y_j ; \htheta) -\exp(\htheta^\top X_i) /n} \cdot \left( X_i - \frac{  S_n^{(1)}(Y_j ; \htheta) }{S_n^{(0)}(Y_j ; \htheta)} \right)    \Bigg\}   \\
    & \quad +   \left[\nabla_\theta^2 \L(\htheta )/ n\right]^{-1} \frac{\exp(\htheta^\top X_i)}{n} \cdot \frac{\Delta_i }{S_n^{(0)}(Y_i ; \htheta) -\exp(\htheta^\top X_i) /n} \cdot \left( X_i-\frac{ S_n^{(1)}(Y_i ; \htheta)}{S_n^{(0)}(Y_i ; \htheta) } \right) .
\end{align*}
Define 
\[J_n(t; \theta, Z_i) = \frac{ I(t \le Y_i)  }{S_n^{(0)}(t ; \theta)}  \cdot \frac{1}{S_n^{(0)}(t ; \theta) -\exp(\theta^\top X_i) /n} \cdot \left( X_i - \frac{  S_n^{(1)}(t ; \theta) }{S_n^{(0)}(t ; \theta)} \right),\]
and 
\[J(t; \theta, Z_i) = \frac{ I(t \le Y_i)  }{s^{(0)}(t ; \theta)}  \cdot \frac{1}{s^{(0)}(t ; \theta) -\exp(\theta^\top X_i) /n} \cdot \left( X_i - \frac{  s^{(1)}(t ; \theta) }{s^{(0)}(t ; \theta)} \right).\]
Then we rewrite $\operatorname{Diff}_n(i)$ using the empirical process notation:
\begin{align}
    \operatorname{Diff}_n(i) =&  - \left[\nabla_\theta^2 \L(\htheta )/ n\right]^{-1} \frac{\exp( 2 \htheta^\top X_i)}{n} \cdot \int_{0}^{\tau} J_n(t;\htheta, Z_i) dG_n(t) \nonumber\\
    & \quad +   \left[\nabla_\theta^2 \L(\htheta )/ n\right]^{-1} \frac{\exp(\htheta^\top X_i)}{n} \cdot \frac{\Delta_i }{S_n^{(0)}(Y_i ; \htheta) -\exp(\htheta^\top X_i) /n} \cdot \left( X_i-\frac{ S_n^{(1)}(Y_i ; \htheta)}{S_n^{(0)}(Y_i ; \htheta) } \right). \label{eq: Diff counting}
\end{align}
Next, we show that 
\begin{align}
    \int_{0}^{\tau} J_n(t;\htheta, Z_i) dG_n(t)=  \int_{0}^{\tau} J(t;\theta^*, Z_i) dG(t) +o_p(1). \label{eq: Jn consistency}
\end{align}
To prove \Cref{eq: Jn consistency}, we further decompose it into four terms: 
\begin{align*}
    \int_{0}^{\tau} J_n(t;\htheta, Z_i) dG_n(t) & - \int_{0}^{\tau} J(t;\theta^*, Z_i) dG(t)
    =  \underbrace{\int_{0}^{\tau} \left(J_n(t;\htheta, Z_i) - J(t;\htheta, Z_i)\right) d(G_n(t)-G(t))}_{I_1} \\
    & + \underbrace{\int_{0}^{\tau} J(t;\htheta, Z_i) d(G_n(t)-G(t))}_{I_2} + \underbrace{\int_{0}^{\tau} \left(J_n(t;\htheta, Z_i) - J(t;\htheta, Z_i)\right) dG(t)}_{I_3} \\
    &  + \underbrace{\int_{0}^{\tau}  \left(J(t;\htheta, Z_i) -  J(t;\theta^*, Z_i) \right) dG(t)}_{I_4}.
\end{align*}
For the first term $I_1$, by the triangle inequality, we have
\begin{align}
   & \sup_{t\in [0, \tau], \theta \in \mathcal{B}}\left\| J_n(t; \theta, Z_i)  -  J(t; \theta, Z_i) \right)\| \nonumber \\
 \le & \sup_{t\in [0, \tau], \theta \in \mathcal{B}} \left\| \frac{ I(t \le Y_i)  }{S_n^{(0)}(t ; \theta) \left(S_n^{(0)}(t ; \theta) -\exp(\theta^\top X_i) /n\right)}  \cdot X_i -   \frac{ I(t \le Y_i)}{\left[(s^{(0)}(t ; \theta)\right]^2} \cdot  X_i \right\| \nonumber \\
 & + \sup_{t\in [0, \tau], \theta \in \mathcal{B}}\left\| \frac{ I(t \le Y_i)  }{\left[S_n^{(0)}(t ; \theta)\right]^2 \left(S_n^{(0)}(t ; \theta) -\exp(\theta^\top X_i) /n\right)}  \cdot  S_n^{(1)}(t ; \theta)  -   \frac{ I(t \le Y_i)}{\left[(s^{(0)}(t ; \theta)\right]^3}
  s^{(1)}(t ; \theta) \right\| \nonumber \\
  \lesssim & \sup_{t\in [0, \tau], \theta \in \mathcal{B}} \left| \frac{ 1  }{\left[(S_n^{(0)}(t ; \theta)\right]^2} -   \frac{ 1}{\left[(s^{(0)}(t ; \theta)\right]^2}\right|     + O_p(\frac{1}{n}) \nonumber\\
  & + \sup_{t\in [0, \tau], \theta \in \mathcal{B}}\left\| \frac{ 1 }{\left[S_n^{(0)}(t ; \theta)\right]^3}  \cdot  S_n^{(1)}(t ; \theta)  -   \frac{ 1}{\left[(s^{(0)}(t ; \theta)\right]^3}
  s^{(1)}(t ; \theta) \right\| \label{eq: J diff}
\end{align}
where the second inequality relies on the the boundedness of the support of $X_i$, $\tau$, and $\mathcal{B}$. Here, ``$W_1 \lesssim W_2$'' denotes that there exists a universal constant $C>0$ such that $W_1 \le C W_2$. 
Under Conditions (1)-(3), the function class $\{f_{t,\theta}(x,y)=I(y\ge t) \exp(\theta^\top x): t \in [0, \tau], \theta \in \mathcal{B}\}$ is a Glivenko-Cantelli class, i.e., $\sup_{t\in [0, \tau], \theta \in \mathcal{B}}\|S_n^{(0)}(t; \theta) - s^{(0)}(t;\theta)\| = o_p(1)$. Similarly, we have $\sup_{t\in [0, \tau], \theta \in \mathcal{B}}\|S_n^{(1)}(t; \theta) - s^{(1)}(t;\theta)\| = o_p(1)$. By applying Taylor expansion to terms in \Cref{eq: J diff} and the boundedness, we obtain the uniform convergence: 
\begin{align}
    \sup_{t\in [0, \tau], \theta \in \mathcal{B}}\left\| J_n(t; \theta, Z_i)  -  J(t; \theta, Z_i) \right)\| =o_p(1). \label{eq: J diff unif conv}
\end{align}
By the empirical process theory, we have $\sqrt{n}(G_n(t)-G(t))$ converges to a Gaussian process uniformly. Therefore, it follows that
\[I_1 = \int_{0}^{\tau} \left(J_n(t;\htheta, Z_i) - J(t;\htheta, Z_i)\right) d(G_n(t)-G(t)) = o_p(1/\sqrt{n}).\]
For the second term, note that $J(t;\htheta, Z_i)$ is bounded and thereby $I_2 = O_p(1/\sqrt{n})$. For the third term, due to uniform convergence in \Cref{eq: J diff unif conv}, it follows that $I_3 = o_p(1)$.
Given the boundedness and the consistency of $\htheta$, i.e., $\htheta = \theta^* +o_p(1)$, we have $\sup_{t\in [0,\tau]}\|J(t;\htheta, Z_i) -  J(t;\theta^*, Z_i)\| = o_p(1)$ and thereby $I_4 = o_p(1)$. So far, we have completed the proof of \Cref{eq: Jn consistency}. \\
Finally, we plug in \Cref{eq: Jn consistency} together with known consistency results into \Cref{eq: Diff counting}: $\htheta = \theta^* +o_p(1)$ and $\nabla_\theta^2 \L(\htheta )/ n = \boldsymbol{A} + o_p(1)$, and obtain that 
\begin{align*}
    \operatorname{Diff}(i) =   & - \left[\boldsymbol{A} + o_p(1) \right]^{-1} \frac{\exp( 2 {\theta^*}^\top X_i)+ o_p(1)}{\boldsymbol{n}}\cdot\left( \int_{0}^{\tau} J(t;\theta^*, Z_i) dG(t) +o_p(1)\right) \\
 & +\left[\boldsymbol{A} + o_p(1) \right]^{-1} \frac{\exp({\theta^*}^\top X_i)+ o_p(1)}{\boldsymbol{n}} \cdot \frac{\Delta_i}{s^{(0)}(Y_i ; \theta^*) + o_p(1)} \left( X_i 
 - \frac{s^{(1)}(Y_i ; \theta^*)+ o_p(1)}{s^{(0)}(Y_i ; \theta^*)+ o_p(1)} \right)\\
 = & - \left[\boldsymbol{A}\right]^{-1} \frac{\exp( 2 {\theta^*}^\top X_i)}{\boldsymbol{n}}\cdot \int_{0}^{\tau} J(t;\theta^*, Z_i) dG(t)  \\
 & +\left[\boldsymbol{A}\right]^{-1} \frac{\exp({\theta^*}^\top X_i)}{\boldsymbol{n}} \cdot \frac{\Delta_i}{s^{(0)}(Y_i ; \theta^*) } \left( X_i 
 - \frac{s^{(1)}(Y_i ; \theta^*)}{s^{(0)}(Y_i ; \theta^*)} \right) + o_p(\frac{1}{n})\\
 = &{O}_p(\frac{1}{n}) . 
\end{align*}
The second equality holds by the continuous mapping theorem and the third equality holds due to the boundedness of the support of $X$, $\mathcal{B}$, and $\tau$. We used the fact that there exists a positive constant $C>0$ such that $\inf_{t\in [0,\tau], \theta \in \mathcal{B}}s^{(0)}(t; \theta)=\E \left(I\left(Y \geq t\right) \exp \left(\theta^\top X\right) \right) \ge C$.
This completes the proof.
\end{proof}

\section{Detailed experiment setup}\label{app:detailed-exp}
\paragraph{Datasets.} For Cox regression, both METABRIC and SUPPORT datasets are split into training, validation, and test sets with a 6:2:2 ratio. The training objects and test objects are defined as the full training and test sets. For node embedding, the test objects are all valid pairs of nodes, i.e., $34\times 34=1156$ objects, while the training objects are the 34 individual nodes. In the case of listwise learning-to-rank, we sample 500 test samples from the pre-defined test set as the test objects. For the Mediamill dataset, we use the full label set as the training objects, while for the Delicious dataset, we sample 100 labels from the full label set (which contains 983 labels in total). The brute-force leave-one-out retraining follows the same training hyperparameters as the full model, with one training object removed at a time.

\begin{table}[h]
\centering
\resizebox{0.7\textwidth}{!}{%
\begin{tabular}{|l|l|l|l|}
\toprule
Scenario                                   & Dataset   & Training obj                                                &  Test obj \\ \midrule
\multirow{2}{*}{Cox regression}            & METABRIC  & 1217 samples                                                         & 381 samples       \\ \cmidrule{2-4} 
                                           & SUPPORT   & 5677 samples                                                         & 1775 samples      \\ \midrule
Node embedding                             & Karate    & 34 nodes                                                           & 1156 pairs of nodes      \\ \midrule
\multirow{2}{*}{Listwise learning-to-rank} & Mediamill & 101 labels                                                           & 500 samples       \\ \cmidrule{2-4} 
                                           & Delicious & 100 labels & 500 samples       \\ \bottomrule
\end{tabular}%
}
\caption{Training objects and test objects in different experiment settings.}
\end{table}

\paragraph{Models.} For Cox regression, we train a CoxPH model with a linear function on the features for both the METABRIC and SUPPORT datasets. The model is optimized using the Adam optimizer with a learning rate of 0.01. We train the model for 200 epochs on the METABRIC dataset and 100 epochs on the SUPPORT dataset. For node embedding, we sample 1,000 walks per node, each with a length of 6, and set the window size to 3. The dimension of the node embedding is set to 2. For listwise learning-to-rank, the model is optimized using the Adam optimizer with a learning rate of 0.001, weight decay of 5e-4, and a batch size of 128 for 100 epochs on both the Mediamill and Delicious datasets. We also use TruncatedSVD to reduce the feature dimension to 8.

\section{Efficient Inverse Hessian Approximation}\label{appendix:efficient-approximation}

Existing methods for efficient inverse Hessian approximation used by the conventional IF for decomposable losses can be adapted to accelerate VIF. Specifically, we consider two methods used by \citet{Koh2017-ov}, Conjugate Gradient (CG) and LiSSA~\citep{agarwal2017second}. The application of CG to VIF is straightforward, as it can be directly applied to the original Hessian matrix. LiSSA is originally designed for decomposable losses in the form \(\sum_{i=1}^n \ell_i(\theta)\) and it accelerates the inverse Hessian vector product calculation by sampling the Hessians of individual loss terms, \(\nabla_{\theta}^2\ell_i(\theta)\). The adaptation of LiSSA to VIF depends on the specific form of the loss function. In many non-decomposable losses (e.g., the all three examples in this paper), the total loss can still be written as the summation of simpler loss terms, even though they are not decomposable to individual data points. In such cases, LiSSA can still be applied to efficiently estimate the inverse Hessian vector product through sampling the simpler loss terms.

\subsection{Experiments}

We implement the CG and LiSSA versions of accelerated VIF for the Cox regression model, and experiment them on the METABRIC dataset. In addition to the linear model, we also experiment with a neural network model, where the relative risk function is implemented as a two-layer MLP with ReLU activation. We use \emph{VIF (Explicit)} to refer to the VIF with explicit inverse Hessian calculation, while using \emph{VIF (CG)} and \emph{VIF (LiSSA)} to refer to the accelerated variants.

\paragraph{Performance.} As can be seen from \Cref{table:additional-baseline}, the accelerated methods VIF (CG) and VIF (LiSSA) achieve similar performance as both the original VIF (Explicit) and the Brute-Force LOO on both the linear and neural network models. The correlation coefficients of all methods on the neural network model are lower than those on the linear model due to the randomness inherent in the model training.

\begin{table}[h]
\centering
\caption{The Pearson correlation coefficients of methods for Cox regression on the METABRIC dataset.}~\label{table:additional-baseline}
\resizebox{0.45\textwidth}{!}{%
\begin{tabular}{@{}|l|l|c|@{}}
\toprule
\diagbox[width=8em]{Methods}{Models} & Linear & Neural Network    \\  
\midrule
VIF (Explicit)                       & 0.997  & 0.238  \\ \midrule
VIF (CG)                             & 0.997  & 0.201  \\ \midrule
VIF (LiSSA)                          & 0.981  & 0.197  \\
\midrule
Brute-Force                          & 0.997  & 0.219  \\
\bottomrule
\end{tabular}%
}
\end{table}

\paragraph{Runtime.} We further report the runtime of different methods on neural network models with varying model sizes. VIF (CG) and VIF (LiSSA) are not only faster than VIF (Explicit), especially as the model size grows, but also much more memory efficient. VIF (Explicit) runs out of memory quickly as the model size grows, while VIF (CG) and VIF (LiSSA) can be scaled to much larger models.

\begin{table}[h]
\caption{Runtime comparison of methods for Cox regression on the METABRIC dataset. The ``\#Param'' refers to the total number of parameters in the neural network model.}\label{table:additional-experiment-runtime}
\centering
\resizebox{0.7\textwidth}{!}{%
\begin{tabular}{@{}|l|l|l|l|l|@{}}
\toprule
\#Param & VIF (Explicit) & VIF (CG) & VIF (LiSSA) & Brute-Force \\ \midrule
0.04K   & 9.88s          & 5.68s    & 8.85s       & 5116s       \\ \midrule
10.3K  & 116s           & 27.7s    & 17.18s       & 6289s       \\ \midrule
41.0K  & OOM            & 113s     & 67.7s       & /           \\ \midrule
81.9K  & OOM            & 171s     & 79.1s       & /           \\ \bottomrule
\end{tabular}%
}
\end{table}

\section{Heatmap of Node Embedding}

In \Cref{fig:network-embedding-heatmap}, we present the heatmap of the influence estimated by VIF and the actual LOO loss difference on two pairs of nodes. VIF could identify the top and bottom influential nodes accurately, while the estimation of node influence in the middle more noisy. One caveat of these heatmap plots is that there is a misalignment between the color maps for VIF and LOO. This reflects the fact that, while VIF is effective at having a decent correlation with LOO, the absolute values tend to be misaligned.

\begin{figure}[h]
    \centering
    \begin{minipage}{0.24\textwidth}
        \centering
        \includegraphics[width=\linewidth]{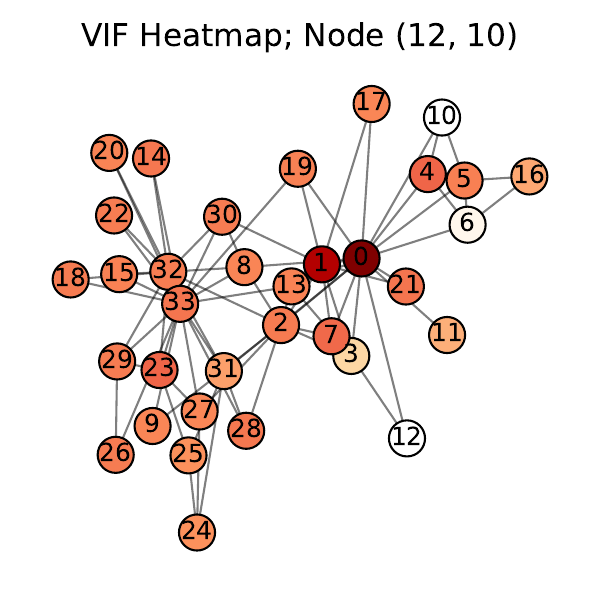}
        \vspace{-20pt}
        \subcaption{VIF on (12,10)}
        \label{fig:node_12_10_score_heat}
    \end{minipage}
    \begin{minipage}{0.24\textwidth}
        \centering
        \includegraphics[width=\linewidth]{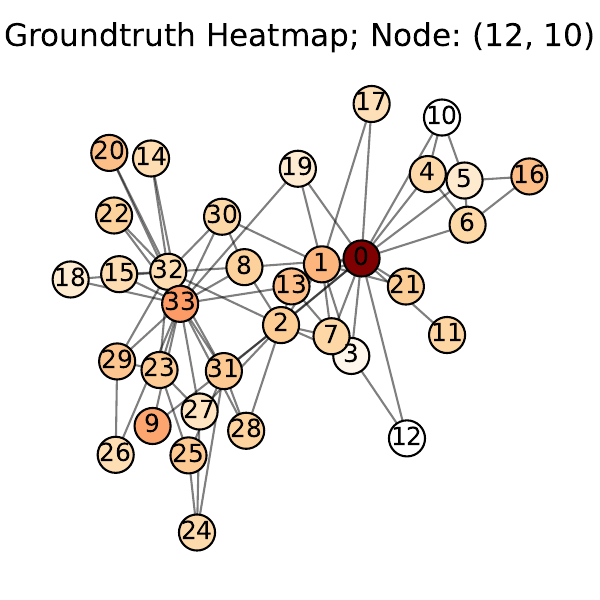}
        \vspace{-20pt}
        \subcaption{LOO on (12, 10)}
        \label{fig:node_12_10_gt_heat}
    \end{minipage}
    \begin{minipage}{0.24\textwidth}
        \centering
        \includegraphics[width=\linewidth]{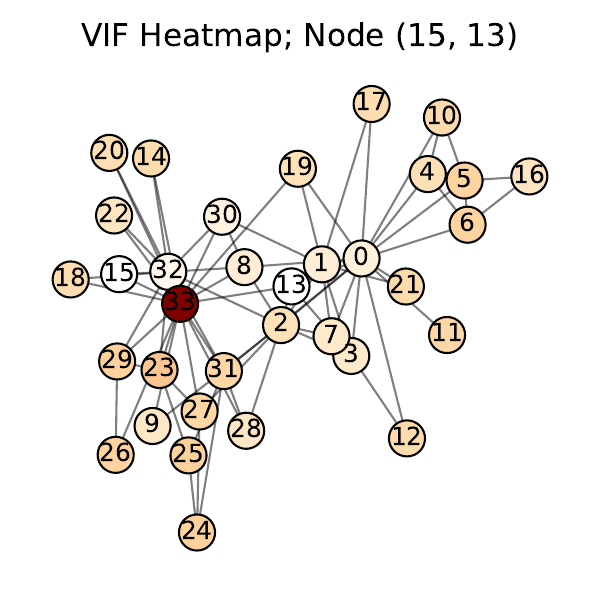}
        \vspace{-20pt}
        \subcaption{VIF on (15,13)}
        \label{fig:node_15_13_score_heat}
    \end{minipage}
    \begin{minipage}{0.24\textwidth}
        \centering
        \includegraphics[width=\linewidth]{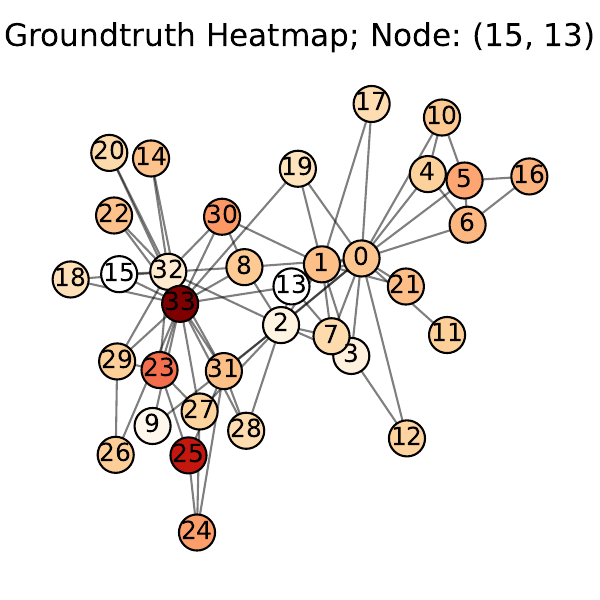}
        \vspace{-20pt}
        \subcaption{LOO on (15,13)}
        \label{fig:node_15_13_gt_heat}
    \end{minipage}
    \caption{VIF is applied to Zachary’s Karate network to estimate the influence of each node on the contrastive loss of a pair of test nodes. Figure~\ref{fig:node_12_10_score_heat} and Figure~\ref{fig:node_12_10_gt_heat} represent the heatmap of influence on the node pair (12,10). Figure~\ref{fig:node_15_13_score_heat} and Figure~\ref{fig:node_15_13_gt_heat} represent the heatmap of influence on the node pair (15,13).}
    \label{fig:network-embedding-heatmap}
\end{figure}

\end{document}